\documentclass{article} 
\usepackage{preprint,times}

\usepackage{amsmath}
\usepackage{amsfonts}
\usepackage{bm}











\def\eqref#1{equation~\ref{#1}}









\def\1{\bm{1}}








\def\vv{{\bm{v}}}



\DeclareMathAlphabet{\mathsfit}{\encodingdefault}{\sfdefault}{m}{sl}
\SetMathAlphabet{\mathsfit}{bold}{\encodingdefault}{\sfdefault}{bx}{n}













\DeclareMathOperator*{\argmin}{arg\,min}

\usepackage{siunitx}
\sisetup{output-exponent-marker=\ensuremath{\mathrm{e}}}
\usepackage{hyperref}
\hypersetup{hidelinks}
\usepackage{subcaption}
\usepackage{mathtools}
\usepackage{booktabs}
\usepackage{makecell}
\usepackage{colortbl}
\usepackage{amssymb}
\usepackage{wrapfig}
\usepackage{amsthm}
\usepackage{pifont}
\usepackage{soul}
\usepackage{url}
\usepackage[normalem]{ulem}
\usepackage{paralist}

\renewcommand{\eqref}[1]{Eq.~(\ref{#1})}

\DeclareMathOperator{\proj}{proj}
\DeclareMathOperator{\unif}{unif}
\DeclareMathOperator{\solve}{solve}
\DeclareMathOperator{\lognormal}{lognormal}

\DeclareMathOperator{\LPIPS}{LPIPS}
\DeclareMathOperator{\PH}{PH}
\DeclareMathOperator{\DSM}{DSM}
\DeclareMathOperator{\MSE}{MSE}
\DeclareMathOperator{\HPF}{HPF}

\DeclareMathOperator{\GFM}{GFM}
\DeclareMathOperator{\SKD}{SKD}
\DeclareMathOperator{\FM}{FM}

\DeclareMathOperator{\sg}{sg}
\DeclareMathOperator{\sig}{sig}


\definecolor{Gray}{gray}{0.9}
\newcommand{\cmark}{\ding{51}}
\newcommand{\xmark}{\ding{55}}

\newcommand{\xx}{\bm{x}}
\newcommand{\yy}{\bm{y}}

\renewcommand{\vv}{\bm{v}}

\newcommand{\DD}{\bm{D}}
\newcommand{\EE}{\mathbb{E}}
\newcommand{\FF}{\bm{F}}

\newcommand{\LL}{\mathcal{L}}
\newcommand{\NN}{\mathcal{N}}
\newcommand{\PP}{\mathbb{P}}
\newcommand{\QQ}{\mathbb{Q}}
\newcommand{\RR}{\mathbb{R}}

\newcommand{\TT}{\mathbb{T}}

\newcommand{\ie}{\textit{i.e.}}
\newcommand{\eg}{\textit{e.g.}}

\newtheorem{proposition}{Proposition}
\newtheorem{lemma}{Lemma}

\newcommand{\first}[1]{\mathbf{#1}}
\newcommand{\second}[1]{\underline{#1}}

\makeatletter
\newcommand*{\rom}[1]{\expandafter\@slowromancap\romannumeral #1@}
\makeatother

\usepackage{tikz}
\newcommand{\dittotikz}{%
    \tikz{
        \draw [line width=0.12ex] (-0.2ex,0) -- +(0,0.8ex)
            (0.2ex,0) -- +(0,0.8ex);
        \draw [line width=0.08ex] (-0.6ex,0.4ex) -- +(-1.5em,0)
            (0.6ex,0.4ex) -- +(1.5em,0);
    }%
}

\usepackage{multirow}
\usepackage{multicol}
\usepackage{float}
\usepackage{booktabs}

\title{Simple ReFlow: Improved Techniques for \\ Fast Flow Models}


\author{Beomsu Kim\\Apple and KAIST
\And Yu-Guan Hsieh \\ Apple
\And Michal Klein \\ Apple
\AND Marco Cuturi\\ Apple
\And Jong Chul Ye \\ KAIST
\And Bahjat Kawar \\ Apple
\And James Thornton \\ Apple
}

%

\iclrfinalcopy 
\begin{document}

\maketitle

\begin{abstract}
Diffusion and flow-matching models achieve remarkable generative performance but at the cost of many sampling steps, this slows inference and limits applicability to time-critical tasks. The ReFlow procedure can accelerate sampling by straightening generation trajectories. However, ReFlow is an iterative procedure, typically requiring training on simulated data, and results in reduced sample quality. To mitigate sample deterioration, we examine the design space of ReFlow and highlight potential pitfalls in prior heuristic practices. We then propose seven improvements for training dynamics, learning and inference, which are verified with thorough ablation studies on CIFAR10 $32 \times 32$, AFHQv2 $64 \times 64$, and FFHQ $64 \times 64$. Combining all our techniques, we achieve state-of-the-art FID scores (without / with guidance, resp.) for fast generation via neural ODEs: $2.23$ / $1.98$ on CIFAR10, $2.30$ / $1.91$ on AFHQv2, $2.84$ / $2.67$ on FFHQ, and $3.49$ / $1.74$ on ImageNet-64, all with merely $9$ neural function evaluations.
\end{abstract}

\vspace{-4mm}
\section{Introduction}
The diffusion model (DMs) paradigm \citep{dickstein2015thermo,ho2020ddpm} has changed the landscape of generative modelling of perceptual data, benefitting from scalability, stability and remarkable performance in a diverse set of tasks ranging from unconditional generation \citep{dhariwal2021beat} to conditional generation such as image restoration \citep{chung2023dps}, editing \citep{meng2022sdedit}, translation \citep{su2023ddib}, and text-to-image generation \citep{rombach2022ldm}. However, to generate samples, DMs require numerically integrating a differential equation using tens to hundreds of neural function evaluations (NFEs) \citep{song2021sde,song2021ddim}. Naively reducing the NFE increases discretization error, causing sample quality to worsen. This has sparked wide interest in accelerating diffusion sampling \citep{song2021ddim,lu2022dpmsolver,zhang2023deis,kim2023dmcmc,salimans2022pd,song2023cm}.

Flow matching models (FMs) \citep{lipman2023fm} are a closely related class of generative model sharing similar training and sampling procedures and enjoying similar performance to diffusion models. Indeed, FM and DMs coincide for a particular choice of forward process \citep{kingma2023elbo}. Whereas diffusion models relate to entropically regularized transport \citep{bortoli2021dsb, shi2023dsbm, peluchetti2023diffusion}, a key property of flow matching models is their connection to non-regularized optimal transport, and hence deterministic, straight trajectories \citep{liu2022rfmp}.

While there exist a plethora of acceleration techniques, one promising, yet less explored avenue is ReFlow \citep{liu2022rf,liu2022rfmp}, also known as Iterative Markovian Fitting (IMF) \citep{shi2023dsbm}. ReFlow straightens ODE trajectories through flow-matching between marginal distributions coupled by a previously trained flow ODE, rather than using an independent coupling. Theoretically, with an infinite number of ReFlow updates, the resulting learned ODE should be straight, which enables perfect translation between the marginals with a single function evaluation \citep{liu2022rf}.

In practice however, ReFlow results in a drop in sample quality \citep{liu2022rf,liu2024instaflow}. To address this problem, recent works on sampling acceleration via ReFlow opt to use heuristic tricks such as perceptual losses that only loosely adhere to the underlying theory \citep{lee2024rfpp,liu2024slimflow}. Consequently, it is unclear whether the marginals are still preserved after ReFlow. This is problematic, as exact inversion and tractable likelihood calculation require access to a valid probability flow ODE between the marginals. Moreover, these two functions are critical to downstream applications such as zero-shot classification \citep{li2023zeroshot}  etc.

The goal of this work is to study and mitigate the performance drop after ReFlow without violating the theoretical setup. Although technical in nature, we call our method \textit{simple}, as, similar to \textit{simple diffusion} \citep{hoogeboom2023simple}, it does not rely on latent-encoders, perceptual losses, or premetrics, whose effect on the learned marginals is poorly understood. To this end, we first disentangle the components of ReFlow. Next, we examine the pitfalls of previous practices. Finally, we propose enhancements within the theoretical bounds, and verify them through rigorous ablation studies.

Our contributions are summarized as follows.
\begin{compactitem}
    \item \textbf{We generalize and categorize the design choices of ReFlow (Section \ref{sec:design}).} We generalize the ReFlow training loss and categorize the design choices of ReFlow into three key groups: \textit{training dynamics}, \textit{learning}, and \textit{inference}. Within each group, we discuss previous practices, highlight their potential pitfalls, and propose improved techniques.
    \item \textbf{We analyze each improvement via ablations (Sections \ref{sec:dynamics}, \ref{sec:learning}, and \ref{sec:inference}).} For each proposed improvement, we verify its effect on sample quality via extensive ablations on three datasets: CIFAR10 $32 \times 32$ \citep{cifar10}, FFHQ $64 \times 64$ \citep{ffhq}, and AFHQv2 $64 \times 64$ \citep{afhqv2}. We demonstrate that our techniques are robust, and they offer consistent gains in FID scores \citep{heusel2017fid} on all three datasets.
    \item \textbf{We achieve state-of-the-art results (Section \ref{sec:applications}).} With all our improvements, we set state-of-the-art FIDs for fast generation via neural ODEs, without perceptual losses or premetrics. Our best models achieve $2.23$ FID on CIFAR10, $2.30$ FID on AFHQv2, $2.84$ FID on FFHQ, and $3.49$ FID on ImageNet-64, all with merely 9 NFEs. In particular, our models outperform the latest fast neural ODEs such as curvature minimization \citep{lee2023curvature} and minibatch OT flow matching \citep{pooladian2023mfm}. We are also able to further enhance the perceptual quality of samples via guidance, setting $1.98$ FID on CIFAR10, $1.91$ FID on AFHQv2, $2.67$ FID on FFHQ, and $1.74$ FID on ImageNet-64, also with 9 NFEs.
\end{compactitem}

\section{Background}

Let $\PP_0$ and $\PP_1$ be two data distributions on $\RR^d$. Rectified Flow (RF) \citep{liu2022rf,liu2022rfmp} is an algorithm which learns straight ordinary differential equations (ODEs) between $\PP_0$ and $\PP_1$ by iterating a procedure called ReFlow. Below, we describe ReFlow, and explain how it can be applied to diffusion probability flow ODEs to learn fast generative flow models.

\subsection{Flow Matching and ReFlow}

Let us first define the flow matching (FM) loss and its equivalent formulation as a denoising problem
\begin{align}
    \LL_{\FM}(\theta;\QQ_{01}) &\coloneqq \EE_{(\xx_0,\xx_1) \sim \QQ_{01}} \EE_{t \sim \unif(0,1)} \left[ \ell_{\MSE}(\xx_1 - \xx_0, \vv_\theta(\xx_t,t)) \right] \label{eq:fm_loss} \\
    &\coloneqq \EE_{(\xx_0,\xx_1) \sim \QQ_{01}} \EE_{t \sim \unif(0,1)} [ t^{-2} \cdot \ell_{\MSE}(\xx_0,\DD_\theta(\xx_t,t)) ] \label{eq:dfm_loss}
\end{align}
where $\vv_\theta : \RR^d \times (0,1) \rightarrow \RR^d$ is a velocity parameterized by $\theta$, $\ell_{\MSE}(\xx,\yy) \coloneqq \|\xx - \yy\|_2^2$, $\xx_t \coloneqq (1 - t) \xx_0 + t \xx_1$, and $\QQ_{01}$ is a coupling, \ie, a joint distribution, of $\PP_0$ and $\PP_1$, and $\DD_\theta(\xx_t,t) \coloneqq \xx_t - t \vv_\theta$ is a denoiser that is optimized to recover the original data $\xx_0$ given a corrupted observation $\xx_t$ and time $t$ as inputs. According to the FM theory, a velocity which minimizes \eqref{eq:fm_loss} or a denoiser which minimizes \eqref{eq:dfm_loss} can translate samples from $\PP_i$ to $\PP_{1-i}$, $i \in \{0,1\}$, by solving the ODE
\begin{align}
    d\xx_t = \vv_\theta(\xx_t,t) \, dt = t^{-1}(\xx_t - \DD_\theta(\xx_t,t))\, dt, \quad t \in (0,1) \label{eq:fm_ode}
\end{align}
from $t = i$ to $1-i$ \citep{lipman2023fm}.
Let us call the denoiser which minimizes \eqref{eq:dfm_loss} w.r.t. the independent coupling $\QQ_{01} = \PP_0 \otimes \PP_1$ as $\DD_\theta^1$.

For $n \geq 1$, ReFlow minimizes \eqref{eq:dfm_loss} with coupling induced by $\DD_\theta^n$ to obtain $\DD_\theta^{n+1}$ whose ODE has a lower transport cost. Specifically, observe that \eqref{eq:fm_ode} with $\DD_\theta^n$ induces a coupling
\begin{align}
    d\QQ_{01}^n(\xx_0,\xx_1) \coloneqq
    \begin{cases}
        d\PP_1(\xx_1) \delta(\xx_0 - \solve(\xx_1,\DD_\theta^n,1,0)) \\
        d\PP_0(\xx_0) \delta(\xx_1 - \solve(\xx_0,\DD_\theta^n,0,1))
    \end{cases} \label{eq:coupling}
\end{align}
where $\delta$ is the Dirac delta, and $\solve(\xx,\DD_\theta^n,t_0,t_1)$ solves \eqref{eq:fm_ode} from time $t = t_0$ to $t_1$ with initial point $\xx$. Concretely, given $\xx_i \sim \PP_i$ for $i \in \{0,1\}$, we sample $\xx_{1-i} \sim \QQ_{1-i|i}(\cdot|\xx_i)$ by integrating \eqref{eq:fm_ode} from $t = i$ to $1-i$ starting from $\xx_i$. The two expressions in \eqref{eq:coupling} are equivalent, since an ODE defines a bijective map between initial and terminal points.

RF guarantees that if $\DD_\theta^{n+1}$ is a minimizer of $\LL_{\FM}(\theta;\QQ_{01}^n)$, the ODE with $\DD_\theta^{n+1}$ has transport cost less than or equal to that of the ODE with $\DD_\theta^n$. An ODE with minimal transport cost has perfectly straight trajectories, making it possible to translate between the marginals with a single Euler step, \eg, $\xx_0 = \DD_\theta(\xx_1,1)$ when translating from $t = 1$ to $0$ (see Section 3 of \citealp{liu2022rf}).



\subsection{ReFlow with Diffusion Probability Flow ODEs}

Given distribution $\PP_0$ on $\RR^d$ and Gaussian perturbation kernel $d\QQ_{\sigma|0}(\yy_\sigma | \yy_0) \coloneqq \NN(\yy_\sigma | \yy_0, \sigma^2 \bm{I})$, DMs solve the denoising score matching (DSM) \citep{vincent2011dsm} problems
\begin{align}
    \LL_{\DSM}(\theta) \coloneqq \EE_{\sigma \sim \mathbb{S}} \EE_{\yy_0 \sim \PP_0} \EE_{\yy_\sigma \sim \QQ_{\sigma | 0}(\cdot | \yy_0)} \left[ \ell_{\MSE}(\yy_0, \FF_\theta(\yy_\sigma,\sigma)) \right]
\end{align}
to learn a denoiser $\FF_\theta : \RR^d \times (0,\infty) \rightarrow \RR^d$. $\FF_\theta$ then defines a probability flow ODE between $\PP_0$ and $\PP_0 * \NN(\bm{0},\hat{\sigma}^2 \bm{I}) \approx \NN(\bm{0},\hat{\sigma}^2 \bm{I})$ for a large $\hat{\sigma}$:
\begin{align}
    d\yy_\sigma = \sigma^{-1} (\yy_\sigma - \FF_\theta(\yy_\sigma,\sigma)) \, d\sigma, \quad \sigma \in (0,\infty). \label{eq:diff_pfode}
\end{align}
When $\PP_1$ is standard normal, \eqref{eq:fm_ode} with $\DD^1_\theta$ and \eqref{eq:diff_pfode} are equivalent, as \eqref{eq:fm_ode} with the change of variables $(\yy_{\sigma},\sigma) \coloneqq (\frac{\xx_t}{1 - t},\frac{t}{1 - t})$ and \eqref{eq:diff_pfode} are identical \citep{lee2024rfpp}.
It follows that we can straighten diffusion probability flow ODE trajectories via ReFlow. Specifically, with the coupling
\begin{align}
d\QQ_{01}^1(\xx_0,\xx_1) =
\begin{cases}
    d\PP_1(\xx_1) \delta(\xx_0 - \solve(\frac{\xx_1}{1 - t},\FF_\theta,\frac{t}{1-t},0)) \\
    d\PP_0(\xx_0) \delta(\xx_1 - (1-t) \cdot \solve(\xx_0,\FF_\theta,0,\frac{t}{1-t}))
\end{cases} \label{eq:diff_coupling}
\end{align}
where $t \approx 1$ and $\solve(\yy,\FF_\theta,\sigma_0,\sigma_1)$ solves \eqref{eq:diff_pfode} from $\sigma = \sigma_0$ to $\sigma_1$ with initial point $\yy$, we can minimize \eqref{eq:fm_loss} to learn $\DD^2_\theta$, and so on. Because optimizing \eqref{eq:fm_loss} is often expensive, a typical procedure is to perform one ReFlow step with \eqref{eq:diff_coupling} to get $\DD^2_\theta$, and distill \eqref{eq:fm_ode} trajectories into a student model for one-step generation \citep{liu2022rf,liu2024slimflow,liu2024instaflow}.
\section{Improved Techniques for ReFlow}

We now investigate the design space of ReFlow and propose improvements. Specifically, in Section \ref{sec:design}, we generalize the FM loss and identify the components that constitute ReFlow. The components are organized into three groups -- \textit{training dynamics}, \textit{learning}, and \textit{inference}. In Sections \ref{sec:dynamics}, \ref{sec:learning}, and \ref{sec:inference}, we investigate the pitfalls of previous practices and propose improved techniques in each group. To show that our improvements are robust, we provide rigorous ablation studies on CIFAR10 $32 \times 32$, AFHQv2 $64 \times 64$, and FFHQ $64 \times 64$. \textit{We find that ReFlow training and sampling are very different from those of DMs, and generally require distinct techniques for optimal performance.}

\subsection{The Design Space of ReFlow} \label{sec:design}


\textbf{Generalizing weight and time distribution.} Let the joint distribution of $(\xx_0,\xx_1,t,\xx_t)$ be given by $\xx_0,\xx_1 \sim d\QQ_{01}$, $t \sim \TT$, and $\xx_t = (1-t) \xx_0 + t\xx_1$.
Then \eqref{eq:dfm_loss} can also be expressed as
\begin{align}
    &\LL_{\FM}(\theta;\QQ_{01}) = \EE_{t \sim \unif(0,1)} \EE_{\xx_t \sim \QQ_t} \left[ w(t) \cdot \LL_{\FM}(\theta;\QQ_{01},\xx_t,t) \right], \label{eq:reflow_loss} \\
     \textit{where} ~~ &\LL_{\FM}(\theta;\QQ_{01},\xx_t,t) \coloneqq \EE_{\xx_0 \sim \QQ_{0|t}(\cdot | \xx_t)} \left[ \ell_{\MSE}(\xx_0,\DD_\theta(\xx_t,t)) \right], \label{eq:reflow_subloss}
\end{align}
and $w(t) = t^{-2}$. This shows that the FM loss is separable w.r.t. $(\xx_t,t)$, and the optimal denoising function is given by the posterior mean \citep{robbins1956tweedie}: $\DD^*(\xx_t,t) = \EE_{\xx_0 \sim \QQ_{0|t}(\cdot|\xx_t)}[\xx_0]$.
Hence, we may replace $w(t)$ with a general weight $w(\xx_t,t)$ and use a general time distribution $\TT$
\begin{align}
    \LL_{\FM}(\theta;\QQ_{01}) = \EE_{t \sim \TT} \EE_{\xx_t \sim \QQ_t} \left[ w(\xx_t,t) \cdot \LL_{\FM}(\theta;\QQ_{01},\xx_t,t) \right].
\end{align}
This is minimized under the same condition,
given that $w(\xx_t,t) > 0$ and $\TT$ is supported on $(0,1)$.

\textbf{Generalizing the loss function.} We also consider using general loss functions in ReFlow, \ie,
\begin{align}
    \LL_{\FM}(\theta;\QQ_{01},\xx_t,t) = \EE_{\xx_0 \sim \QQ_{0|t}(\cdot | \xx_t)} \left[ \ell(\xx_0,\DD_\theta(\xx_t,t)) \right]
\end{align}
for general $\ell : \RR^d \times \RR^d \rightarrow \RR$. It is difficult to precisely characterize the class of $\ell$
that preserves the minimizers of
\eqref{eq:fm_loss}, and popular losses such as LPIPS \citep{zhang2018lpips} and pseudo-Huber (PH) \citep{song2024icm} lack this guarantee. However, $\ell_{\MSE}$ has been observed to be sub-optimal compared to, \eg, LPIPS and PH for training fast models \citep{lee2024rfpp}. To mitigate this trade-off between theoretical correctness and practicality, we consider a wider class of losses
\begin{align}
    \ell_{\phi}(\xx,\yy) \coloneqq \| \phi(\xx) - \phi(\yy) \|_2^2 \label{eq:l_phi}
\end{align}
for invertible linear maps $\phi : \RR^d \rightarrow \RR^d$. 
This again ensures that the loss is minimized when $\DD_\theta$ outputs the posterior mean,
and $\ell_{\MSE}$ is a special case of this loss with the identity map $\phi = \bm{I}$.

\textbf{Generalized FM loss.} Combining the two generalizations, we have our generalized FM loss
\begin{align}
    &\LL_{\GFM}(\theta;\QQ_{01}) = \EE_{t \sim \TT} \EE_{\xx_t \sim \QQ_t} \left[ w(\xx_t,t) \cdot \LL_{\GFM}(\theta;\QQ_{01},\xx_t,t) \right], \label{eq:gen_reflow_loss} \\
    \textit{where} ~~ &\LL_{\GFM}(\theta;\QQ_{01},\xx_t,t) \coloneqq \EE_{\xx_0 \sim \QQ_{0|t}(\cdot | \xx_t)} \left[ \ell_{\phi}(\xx_0,\DD_\theta(\xx_t,t)) \right]. \label{eq:gen_reflow_subloss}
\end{align}
The following proposition ensures its theoretical correctness. Proof is deferred to Appendix \ref{append:proof_1}.
\begin{proposition} \label{prop:correctness}
    Let $w(\xx_t,t)$, $d\TT(t)$ be positive, and $\phi$ be an invertible linear map. Then, $\theta$ minimizes \eqref{eq:gen_reflow_loss} if and only if it minimizes \eqref{eq:fm_loss}.
\end{proposition}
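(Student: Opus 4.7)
The plan is to show that both losses are minimized exactly when $\DD_\theta(\xx_t,t)$ equals the conditional posterior mean $\DD^\star(\xx_t,t) \coloneqq \EE_{\xx_0 \sim \QQ_{0|t}(\cdot|\xx_t)}[\xx_0]$ on the support of $\QQ_t \otimes \TT$; the ``iff'' then follows immediately. The key structural fact is that, like \eqref{eq:reflow_loss}, the generalized loss \eqref{eq:gen_reflow_loss} is separable in $(\xx_t,t)$, so the global problem reduces to pointwise minimization of the inner expectation, exactly as in the argument that justifies $\DD^\star$ as the minimizer of \eqref{eq:dfm_loss}.

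\textbf{Pointwise minimization.} Fix $(\xx_t,t)$. Since $\phi$ is linear, $\ell_\phi(\xx_0,\DD_\theta(\xx_t,t)) = \|\phi(\xx_0)-\phi(\DD_\theta(\xx_t,t))\|_2^2$. Substituting $\yy \coloneqq \phi(\DD_\theta(\xx_t,t))$, which is a bijective reparameterization because $\phi$ is invertible on $\RR^d$, the inner loss becomes the standard MSE problem $\EE_{\xx_0}[\|\phi(\xx_0)-\yy\|_2^2]$, whose unique minimizer is $\yy^\star = \EE[\phi(\xx_0) \mid \xx_t,t] = \phi(\EE[\xx_0\mid \xx_t,t])$, where the second equality uses that linear maps commute with conditional expectation. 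Inverting $\phi$ yields $\DD^\star(\xx_t,t) = \EE[\xx_0 \mid \xx_t,t]$, which is exactly the posterior mean that also minimizes the MSE inner loss in \eqref{eq:dfm_loss}. In particular, the family of pointwise minimizers is independent of the choice of invertible linear $\phi$.

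\textbf{Globalization.} The weighted integrand $w(\xx_t,t)\cdot\LL_{\GFM}(\theta;\QQ_{01},\xx_t,t)$ is a strictly positive pointwise rescaling of the inner loss, so $\LL_{\GFM}(\theta;\QQ_{01})$ attains its infimum iff $\DD_\theta(\xx_t,t) = \DD^\star(\xx_t,t)$ for $(\QQ_t \otimes \TT)$-a.e. $(\xx_t,t)$. Positivity of $w(\xx_t,t)$ and of the density $d\TT(t)$ on $(0,1)$ implies that the reweighted measure $w(\xx_t,t)\, d\QQ_t\, d\TT$ has the same null sets as $t^{-2}\, d\QQ_t\, d\unif(0,1)$, the measure implicit in \eqref{eq:fm_loss} after its denoising rewriting \eqref{eq:dfm_loss}. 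Consequently, the set of $\theta$ for which $\DD_\theta$ agrees with $\DD^\star$ almost everywhere — and hence the set of minimizers — coincides for the two losses.

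\textbf{Main obstacle.} The only subtle step is the passage from pointwise optimality to a statement about the parameter $\theta$: the argument characterizes optimal \emph{functions} $\DD$ pointwise, and its translation to parameters requires either interpreting equality modulo $(\QQ_t \otimes \TT)$-null sets or assuming $\{\DD_\theta\}_\theta$ is expressive enough to realize $\DD^\star$, which is the standard implicit assumption of the flow-matching analysis. Beyond this, the proof reduces to the linearity identity $\phi\,\EE[\,\cdot\,] = \EE[\phi(\,\cdot\,)]$ and the textbook fact that squared error is minimized by the conditional mean, so there is no substantive technical hurdle once the weights are confirmed to be null-set preserving.
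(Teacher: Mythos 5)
Your proof is correct and follows essentially the same route as the paper's: both reduce the global problem to pointwise minimization over $(\xx_t,t)$, identify the unique pointwise minimizer as the posterior mean (the paper via the gradient identity $\nabla_{\DD_\theta}\LL_{\GFM} = \phi^\top\phi\,\nabla_{\DD_\theta}\LL_{\FM}$ and strong convexity, you via the bijective reparameterization $\yy = \phi(\DD_\theta)$ and linearity commuting with expectation --- an equivalent use of invertibility), and then use positivity of $w$ and $d\TT$ to argue that global optimality forces pointwise optimality almost everywhere, which the paper formalizes as its Lemma~\ref{lemma:ineq}. Your closing remark about needing either an a.e.\ interpretation or an expressivity assumption on $\{\DD_\theta\}_\theta$ matches the paper's explicit parenthetical assumption in defining $\theta^*$.
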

\begin{table}[t]
\centering
\resizebox{1\textwidth}{!}
{
\begin{tabular}{lllll}
\toprule
& \textbf{RF} & \textbf{RF++} & \textbf{Baseline} & \textbf{Simple ReFlow (Ours)} \\
\cmidrule{1-5}
\textbf{Train Dynamics (Sec. \ref{sec:dynamics})} & & \\
Weight $w(\xx_t,t)$& $1/t^2$ & $1$, $1/t$ & $1$  & $1/\sg[\LL_{\FM}(\theta;\QQ_{01},\xx_t,t)]$ \\
Time distribution $d\TT(t) \propto$ & $1$ & $\cosh(4(t-0.5))$ & $\cosh(4(t-0.5))$ & $10^t$ \\
Loss function $\ell$ & $\ell_{\MSE}$, LPIPS & Pseudo-Huber, LPIPS & $\ell_{\MSE}$ & $\ell_{\bm{I} + \lambda \HPF}$ \\
\cmidrule{1-5}
\textbf{Learning (Sec. \ref{sec:learning})} & & \\
$\DD_\theta$ initialization with DM& \xmark & \cmark & \cmark & \cmark \\
$\DD_\theta$ dropout probability& $0.15$ & Equal to EDM & $0.15$  & $ \ll 0.15$ \\
Sampling from $\QQ_{01}$ & Backward & Backward & Backward & Forward, Projection \\
\cmidrule{1-5}
\textbf{Inference (Sec. \ref{sec:inference})} & & \\
ODE Solver& Euler & Euler, Heun & Heun  & DPM-Solver \\
Discretization of $[0,1]$ & Uniform & Uniform & Uniform & Sigmoid $\kappa=20$ \\
\cmidrule{1-5}
\textbf{Reference}& \makecell{\citep{liu2022rf} \\ \citep{liu2024slimflow}} & \citep{lee2024rfpp} & --  & -- \\
\bottomrule
\end{tabular}}
\caption{Comparison of practices for optimizing the ReFlow loss \eqref{eq:gen_reflow_loss} and solving the ODE \eqref{eq:fm_ode}. $\sg$ means stop gradient and $\mathrm{HPF}$ denotes high-pass filter. \textbf{Baseline} is the combination of most recent techniques which do not violate the flow matching theory.}
\label{tab:choices}
\vspace{-3mm}
\end{table}

\subsubsection{Training Dynamics, Learning, and Inference}

We now observe that there are seven components that constitute ReFlow: time distribution $\TT$, training dataset (empirical realization of $\QQ_{01}$), weight $w(\xx_t,t)$, loss function $\ell_\phi$, denoiser $\DD_\theta$, and ODE solver and discretization schedule for solving \eqref{eq:fm_ode}. We categorize them into three groups below.

\textit{Training dynamics} influence the path that the model takes towards the minimizers 
of \eqref{eq:gen_reflow_loss} during training. Although the solution to which the model converges may change if dynamics changes, training dynamics do not impact the solution set itself. Weight function $w(\xx_t,t)$, time distribution $\TT$, and loss function $\ell_{\phi}$ belong here. \textit{Learning} influence the solution set of \eqref{eq:gen_reflow_loss} by constraining the hypothesis class or by changing the training dataset. Parameterization of $\DD_\theta$ and how we sample from $\QQ_{01}$ belong here. Finally, \textit{inference} influence generation or inversion of samples given a trained model. ODE solver and time discretization of the unit interval belong here.

In Tab.~\ref{tab:choices}, we describe recent ReFlow practices within our framework. Baseline is the collection of most recent ReFlow techniques which do not violate FM theory \citep{lipman2023fm}. We will build up improvements on this baseline setting in the subsequent sections.


\subsection{Improving Training Dynamics} \label{sec:dynamics}

\begin{table}[t]
\centering
\begin{minipage}{0.35\textwidth}
\hspace{-2mm}
\includegraphics[width=1.0\linewidth]{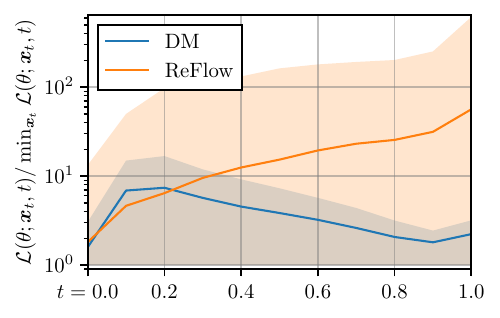}
\captionof{figure}{Min., avg., max. relative losses after training on CIFAR10.}
\label{fig:scales}
\end{minipage} \hspace{3mm}
\begin{minipage}{0.54\textwidth}
\vspace{1mm}
\centering
\resizebox{1.0\textwidth}{!}{
\begin{tabular}{cccc}
\toprule
$w(\xx_t,t)$ & \textbf{CIFAR10} & \textbf{AFHQv2} & \textbf{FFHQ} \\
\cmidrule{1-4}
$1$ & $2.83$ & $2.87$ & $4.28$ \\
$1/t$ & $2.77$ & $\second{2.76}$ & $\second{4.01}$ \\
$1/t^2$ & $2.76$ & $\first{2.74}$ & $4.04$ \\
$(\sigma^2 + 0.5^2) / (0.5\sigma)^2$ & $2.78$ & $2.82$ & $4.04$ \\
$1/\EE_{\xx_t} [ \sg[\LL_{\GFM}(\theta;\QQ_{01},\xx_t,t)] ]$ & $\second{2.74}$ & $2.79$ & $\first{3.83}$ \\
\cmidrule(lr){1-4}
$1/\sg[\LL_{\GFM}(\theta;\QQ_{01},\xx_t,t)]$ & $\first{2.61}$ & $\first{2.74}$ & $\first{3.83}$ \\
\bottomrule
\end{tabular}}
\caption{Comparison of various $w(\xx_t,t)$, combined with baseline $\TT$, $\ell$, and learning choices. Best numbers are \textbf{bolded}, and second best are \underline{underlined}.}
\label{tab:weights}
\end{minipage}
\vspace{-3mm}
\end{table}

\textbf{Evaluation protocol.} Unless written otherwise, to evaluate a training setting, we initialize ReFlow denoisers with pre-trained EDM \citep{karras2022edm} denoisers, and optimize \eqref{eq:gen_reflow_loss} with $\QQ_{01} = \QQ_{01}^1$ of \eqref{eq:diff_coupling} for $200k$ iterations. We sample $1M$ pairs from $\QQ_{01}^1$ by solving \eqref{eq:diff_pfode} from $t = 1$ to $0$ with EDM models and use them for training. We measure the performance of an optimized model by computing the FID \citep{heusel2017fid} between $50k$ generated images and all available dataset images. Samples are generated by solving \eqref{eq:fm_ode} with the Heun solver \citep{ascher1998heun} with 9 NFEs, and \uline{we use the sigmoid discretization instead of the baseline uniform discretization for reasons discussed in Appendix \ref{append:power}.} We we report the minimum FID out of three random generation trials, as done by \cite{karras2022edm}. See Appendix \ref{append:settings} for a complete description.


\subsubsection{Loss Normalization} \label{sec:normalization}

\textbf{Previous practice.} There is little study on suitable loss weights for ReFlow training. For instance, \cite{lee2024rfpp} use $w(\xx_t,t) = 1$, and such choices can be detrimental to training, as the weighted loss $w(\xx_t,t) \cdot \LL_{\GFM}(\theta; \QQ_{01},\xx_t,t)$ without proper modulation by $w(\xx_t,t)$ can have vastly different scales w.r.t. $t$, leading to slow and unstable model convergence. Typically, the loss vanishes as $t \rightarrow 0$ since $\xx_t \rightarrow \xx_0$, and to counteract this, previous works have suggested
\begin{align*}
    w(\xx_t,t) =
    \begin{cases}
        1/t & \text{for ReFlow \citep{lee2024rfpp}}, \\
        (\sigma^2 + 0.5^2) / (0.5\sigma)^2, \ \sigma \coloneqq \frac{t}{1-t} & \text{for DMs \citep{karras2022edm}}, \\
        1/\EE_{\xx_t \sim \QQ_t} [ \sg[\LL_{\GFM}(\theta; \QQ_{01},\xx_t,t)] ] & \text{for DMs \citep{karras2023edm2}},
    \end{cases}
\end{align*}
where $\sg[\cdot]$ is stop-gradient. The FM weight $1/t^{-2}$ in \eqref{eq:dfm_loss} naturally emphasizes $t \approx 0$ as well.

However, we claim that such weights constant w.r.t. $\xx_t$ can be sub-optimal for ReFlow, as ReFlow loss scales can vary greatly w.r.t. $\xx_t$ even for fixed $t$. For instance, the following proposition shows that, at initialization, relative loss for DM at $t=1$ is constant whereas relative loss for ReFlow can be arbitrarily large. Proof is deferred to Appendix \ref{append:proof_2}.

\begin{proposition} \label{prop:scale}
Assume output layer zero initialization for the DM denoiser and DM initialization for the ReFlow denoiser. Then maximum relative losses for DM and ReFlow at $t=1$ are
\begin{gather*}
\textstyle \max_{\xx_1} \LL_{\GFM}(\theta;\PP_0 \otimes \PP_1,\xx_1,1) / \min_{\xx_1} \LL_{\GFM}(\theta;\PP_0 \otimes \PP_1,\xx_1,1) = 1\\
\textstyle \max_{\xx_1} \LL_{\GFM}(\theta;\QQ_{01}^1,\xx_1,1) / \min_{\xx_1} \LL_{\GFM}(\theta;\QQ_{01}^1,\xx_1,1) = \max_{\xx_0,\xx_0'} \|\xx_0 - \bm{\mu}_0 \|_2^2 / \|\xx_0' - \bm{\mu}_0 \|_2^2
\end{gather*}
resp., where $\bm{\mu}_0 = \EE_{\xx_0 \sim \PP_0}[\xx_0]$, and $\min_{\xx_i}$, $\max_{\xx_i}$ is taken w.r.t. $\xx_i$ in the support of $\PP_i$.
\end{proposition}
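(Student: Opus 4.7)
My plan is to evaluate $\DD_\theta(\xx_1,1)$ under each of the two initializations, simplify the sub-loss
\[
\LL_{\GFM}(\theta;\QQ_{01},\xx_1,1) = \EE_{\xx_0 \sim \QQ_{0|t=1}(\cdot|\xx_1)}\left[ \| \xx_0 - \DD_\theta(\xx_1,1)\|_2^2 \right]
\]
to an explicit expression in $\xx_1$, and then read off the max/min ratio directly.

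For the DM case, I will invoke the standard EDM parameterization $\DD_\theta(\xx_t,t) = c_{\mathrm{skip}}(\sigma)\,\xx_t/(1-t) + c_{\mathrm{out}}(\sigma)\,F_\theta(\cdot)$ with $\sigma = t/(1-t)$. A zero output layer forces $F_\theta \equiv 0$, leaving only the skip term. A short computation using $c_{\mathrm{skip}}(\sigma) \sim \sigma^{-2}$ shows $c_{\mathrm{skip}}(\sigma)\,\xx_t/(1-t) \to 0$ as $t \to 1$, so $\DD_\theta(\xx_1,1) = 0$ uniformly in $\xx_1$. Because $\PP_0 \otimes \PP_1$ is independent, the conditional $\QQ_{0|t=1}(\cdot|\xx_1)$ equals $\PP_0$ for every $\xx_1$, and the sub-loss collapses to the constant $\EE_{\xx_0 \sim \PP_0}[\|\xx_0\|_2^2]$, giving a ratio of $1$.

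For the ReFlow case, DM initialization combined with the Tweedie/posterior-mean characterization of the optimal denoiser at $\sigma \to \infty$ gives $\DD_\theta(\xx_1,1) = \bm{\mu}_0$, again constant in $\xx_1$. By contrast, the coupling $\QQ_{01}^1$ in \eqref{eq:diff_coupling} is deterministic given $\xx_1$: the $\xx_0$-conditional is a Dirac at $g(\xx_1) \coloneqq \solve(\xx_1/(1-t_0),\FF_\theta,t_0/(1-t_0),0)$ for the reference $t_0 \approx 1$. The sub-loss therefore reduces to $\|g(\xx_1) - \bm{\mu}_0\|_2^2$, which genuinely varies with $\xx_1$. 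Since the probability-flow ODE is a bijection sending $\PP_1$ to $\PP_0$, ranging $\xx_1$ over $\mathrm{supp}(\PP_1) = \RR^d$ is equivalent to ranging $\xx_0 = g(\xx_1)$ over $\mathrm{supp}(\PP_0)$. Taking the max/min ratio then produces exactly $\max_{\xx_0,\xx_0'} \|\xx_0 - \bm{\mu}_0\|_2^2 / \|\xx_0' - \bm{\mu}_0\|_2^2$ over the support of $\PP_0$.

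The main obstacle is the singular $t \to 1$ limit, where the change of variables $\yy_\sigma = \xx_t/(1-t)$ diverges: one needs $c_{\mathrm{skip}}$-vanishing (for the DM case) and the asymptotic optimality of the trained DM at large $\sigma$ (for the ReFlow case) to tame this divergence and yield well-defined, $\xx_1$-independent outputs. Beyond this limit argument, the proof is largely bookkeeping: unpacking the two coupling definitions, applying the posterior-mean identity at the $\sigma = \infty$ endpoint, and invoking bijectivity of the probability-flow ODE between the two supports.
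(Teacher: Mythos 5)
Your proposal is correct and follows essentially the same route as the paper's proof: both cases reduce to showing $\DD_\theta(\cdot,1)$ is constant ($\bm{0}$ for the zero-initialized DM, $\bm{\mu}_0$ for the DM-initialized ReFlow denoiser), then noting that the independent coupling makes $\QQ_{0|1}(\cdot|\xx_1)=\PP_0$ for every $\xx_1$ (ratio $1$), while the deterministic coupling $\QQ_{01}^1$ together with bijectivity of the probability-flow ODE turns the sup/inf over $\xx_1$ into a sup/inf over $\mathrm{supp}(\PP_0)$. Your extra care with the EDM skip term ($c_{\mathrm{skip}}(\sigma)\yy_\sigma\to 0$ as $\sigma\to\infty$) is a welcome elaboration of a step the paper leaves implicit, but it does not change the argument.
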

 In fact, in Fig.~\ref{fig:scales}, we observe that ReFlow loss varies greatly w.r.t. $\xx_t$ after training as well. In contrast to DM training loss whose minimum and maximum values differ by a factor of at most $20$, minimum ReFlow loss is at least $\times 100$ smaller than the maximum loss for all $t > 0.2$.

\textbf{Our improvement.} We propose a simple improvement by using
\begin{align}
    w(\xx_t,t) = 1/\sg[\LL_{\GFM}(\theta;\QQ_{01},\xx_t,t)].
\end{align}
Similar to \cite{karras2023edm2}, we keep track of the loss values during training with a small neural net that is optimized alongside $\DD_\theta$ using the parameterization $w(\xx_t,t) = \exp(-f_\phi(\xx_t,t))$.

\textbf{Ablations.} In Tab.~\ref{tab:weights}, we compare our weight with all aforementioned weights. As expected, the uniform weight $w(\xx_t,t) = 1$ has the worst performance, as it is unable to account for vanishing loss as $t \rightarrow 0$. We get noticeable FID gain by using weights such as $1/t$ or which place a larger emphasis on $t = 0$. Our weight, which accounts for loss variance w.r.t. both $t$ and $\xx_t$, yields the best FID across all three datasets. The gap between baselines and our weight is especially large on CIFAR10.

\subsubsection{Time Distribution}

\begin{wraptable}{r}{0.4\linewidth}
\vspace{-6mm}
\centering
\begin{minipage}{0.4\textwidth}
\centering
\hspace{-5mm}
\includegraphics[width=0.9\linewidth]{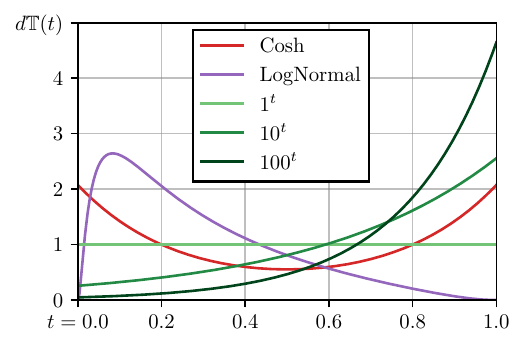}
\vspace{-2mm}
\captionof{figure}{Time distribution densities.}
\label{fig:t_dists}
\vspace{5mm}
\end{minipage}
\begin{minipage}{0.4\textwidth}
\centering
\resizebox{1.0\textwidth}{!}{
\begin{tabular}{cccc}
\toprule
$d\TT(t) \propto $ & \textbf{CIFAR10} & \textbf{AFHQv2} & \textbf{FFHQ} \\
\cmidrule{1-4}
$\cosh(4(t-0.5))$ & $\first{2.61}$ & $2.74$ & $\second{3.83}$ \\
$\lognormal$ & $3.28$ & $3.20$ & $4.48$ \\
$\mathrm{uniform}~ (1^t)$ & $2.65$ & $\second{2.70}$ & $3.85$ \\
\cmidrule(lr){1-4}
$10^t$ & $\second{2.62}$ & $\first{2.69}$ & $\first{3.77}$ \\
$100^t$ & $2.68$ & $\first{2.69}$ & $3.93$ \\
\bottomrule
\end{tabular}}
\caption{Comparison of various $\TT$, combined with our $w(\xx_t,t)$ and baseline $\ell$ and learning choices.}
\label{tab:times}
\end{minipage}
\vspace{-7mm}
\end{wraptable}

\textbf{Previous practice.} \cite{liu2022rf} and \cite{liu2024slimflow} use a uniform distribution on $(0,1)$. On the other hand, \cite{lee2024rfpp} notice better performance with a time distribution whose density is proportional to a shifted hyperbolic cosine function, \ie, $d\TT(t) \propto \cosh(4 (t - 0.5))$, which has peaks at $t = 0$ and $1$. The rationale behind using such a distribution is that, as \eqref{eq:fm_ode} converges to the OT ODE with ReFlows, the denoiser needs to directly predict data from noise at $t \approx 1$ and vice versa at $t \approx 0$, so it is beneficial to emphasize those regions via $\TT$.

\textbf{Our improvement.} The peak at $t = 0$ of the baseline $\cosh$ time density compensates for vanishing loss as $t \rightarrow 0$, but as we normalize the loss with our weight, this peak is now no longer necessary. Thus, we use a distribution with density proportional to the increasing exponential, \ie, $d\TT(t) \propto a^t$ for $a \geq 1$.

\textbf{Ablations.} We compare the performance of the exponential distribution with $a \in \{1,10,100\}$, where $a=1$ corresponds to the uniform distribution. We also compare with the lognormal distribution, which has been observed to be effective for training DMs and CMs \citep{karras2022edm,karras2023edm2,song2024icm}
Fig.~\ref{fig:t_dists} displays time densities, and Tab.~\ref{tab:times} shows training results. We first note that an emphasis on $t = 1$ is necessary, as evidenced by severe FID degradation with the lognormal distribution. 
%
%
We then observe that $10^t$, which closely resembles the $\cosh$ distribution, but without the peak at $t = 0$, has consistently good performance, while suffering from a slight loss on CIFAR10. Other choices such as $1^t$ or $100^t$ are either too flat or sharp, yielding worse FID. Hence, we propose to take $d\TT(t) \propto 10^t$.


\begin{figure}[t]
\centering
\hspace{-1mm}
\begin{subfigure}{0.1865\linewidth}
\includegraphics[width=\linewidth]{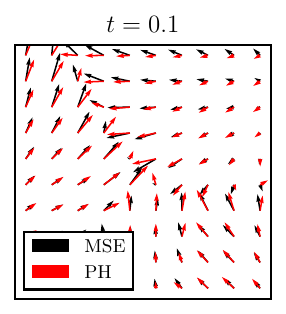}
\caption{$\vv_{\theta}(\xx_t,t)$}
\label{fig:2d_viz_vec}
\end{subfigure} \hspace{1mm}
\begin{subfigure}{0.38\linewidth}
\includegraphics[width=\linewidth]{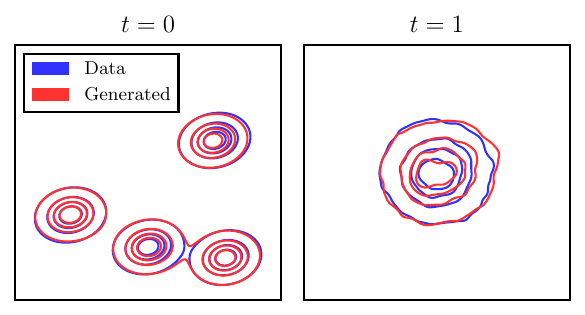}
\caption{FM with $\ell_{\MSE}$}
\label{fig:2d_viz_mse}
\end{subfigure} \hspace{1mm}
\begin{subfigure}{0.38\linewidth}
\includegraphics[width=\linewidth]{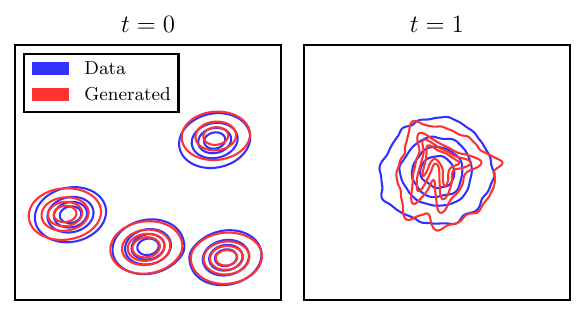}
\caption{FM with PH}
\label{fig:2d_viz_ph}
\end{subfigure}
\caption{Comparison of flow matching (FM) with $\ell_{\MSE}$ and Pseudo Huber (PH) losses.}
\label{fig:2d_viz}
\vspace{-2mm}
\end{figure}

\subsubsection{Loss Function}

\textbf{Previous practice.} To accelerate convergence and mitigate sample quality degradation during ReFlow, previous works have employed heuristic losses in \eqref{eq:dfm_loss} such as LPIPS and
\begin{align*}
\ell(\xx,\yy) =
\begin{cases}
(1/t) \cdot (\|\xx - \yy\|_2^2 + (ct)^2)^{1/2} - c & \text{Pseudo-Huber (PH) \citep{lee2024rfpp}}, \\
\LPIPS(\xx,\yy) + (1-t) \cdot \PH(\xx,\yy) & \text{LPIPS+PH \citep{lee2024rfpp}}.
\end{cases}
\end{align*}
While such losses perform better in practice than $\ell_{\MSE}$ in terms of FID, they do not ensure \eqref{eq:fm_loss} is minimized at optimality, and so lose the theoretical guarantees of FM. We demonstrate this below.

Fig.~\ref{fig:2d_viz} compares FM with $\ell_{\MSE}$ and PH, where $\PP_1$ is unit Gaussian and $\PP_0$ is a mixture of Gaussians. As shown in Fig.~\ref{fig:2d_viz_vec}, the two models learn distinct vector fields, so PH indeed induces different ODE trajectories. While the model trained with $\ell_{\MSE}$ translates between $\PP_0$ and $\PP_1$ accurately, the model trained with PH generates incorrect distributions, \eg, the model density is not isotropic at $t = 1$, and modes are biased at $t = 0$. So, instead of relying on empirical arguments (\eg, Section 4 in \cite{lee2024rfpp}) to justify heuristic losses,
we show that a proper choice of the invertible linear map $\phi$ in \eqref{eq:l_phi} can still offer non-trivial performance gains while adhering to FM theory.

\begin{table}[t]
\centering
\begin{minipage}{0.37\textwidth}
\vspace{2mm}
\centering
\resizebox{1.0\textwidth}{!}{
\begin{tabular}{lccc}
\toprule
\textbf{Loss} & \textbf{CIFAR10} & \textbf{AFHQv2} & \textbf{FFHQ} \\
\cmidrule{1-4}
$\ell_{\MSE}$ & $2.62$ & $2.69$ & $3.77$ \\
PH & $\second{2.59}$ & $2.71$ & $\second{3.75}$ \\
LPIPS & $2.81$ & $2.65$ & $4.02$ \\
LPIPS+PH & $2.63$ & $2.72$ & $3.79$ \\
\cmidrule(lr){1-4}
$\ell_{\bm{I} + 0.1 \HPF}$ & $2.63$ & $\second{2.62}$ & $3.76$ \\
$\ell_{\bm{I} + 10 \HPF}$ & $\first{2.58}$ & $\first{2.55}$ & $\first{3.69}$ \\
$\ell_{\bm{I} + 1000 \HPF}$ & $3.20$ & $2.79$ & $4.43$ \\
\bottomrule
\end{tabular}}
\caption{Comparison of various $\ell$ combined with our $w(\xx_t,t)$ and $\TT$.}
\label{tab:losses}
\end{minipage} \hfill
\begin{minipage}{0.3\textwidth}
\centering
\includegraphics[width=1.0\linewidth]{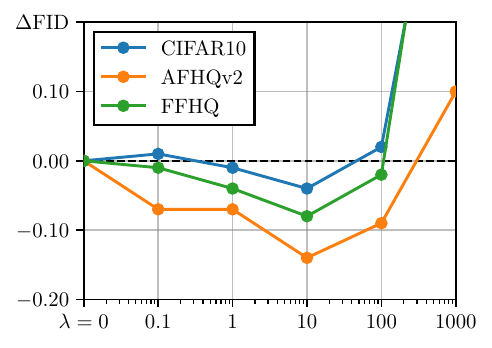}
\captionof{figure}{$\lambda$ ablation.}
\label{fig:hpf_lmda}
\end{minipage}
\begin{minipage}{0.3\textwidth}
\centering
\includegraphics[width=1.0\linewidth]{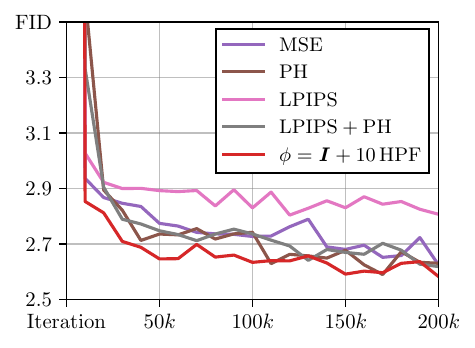}
\captionof{figure}{CIFAR10 training.}
\label{fig:curves}
\end{minipage}
\vspace{-3mm}
\end{table}

\textbf{Our improvement.} Previous works have observed high-frequency features are crucial to diffusion-based modeling of image datasets \citep{kadkhodaie2024gahb,zhang2023hipa,yang2023slim}. Moreover, since we initialize $\DD_\theta$ with a pre-trained DM, we assert that the model already has a good representation of low-frequency visual features. Hence, to accelerate the learning of high-frequency features, we propose calculating the difference of denoiser output $\DD_\theta(\xx_t,t)$ and clean data $\xx_0$ after passing them through a high-pass filter (HPF) using the linear map, \ie, use $\ell_\phi$ in \eqref{eq:l_phi} with
\begin{align}
    \phi = \bm{I} + \lambda \cdot \HPF \label{eq:hpf}
\end{align}
where $\lambda > 0$ controls the emphasis on high-frequency features. The identity matrix in \eqref{eq:hpf} is necessary to ensure that $\phi$ is invertible, so \eqref{eq:fm_loss} is minimized at optimality per Prop.~\ref{prop:correctness}.

We also remark that using $\ell_{\phi}$ can be interpreted as preconditioning the gradient. Specifically, since
\begin{align}
    \nabla_{\DD_\theta} \ell_\phi(\xx_0,\DD_{\theta}(\xx_t,t)) &= \phi^\top \phi \{ \nabla_{\DD_\theta} \ell_{\MSE}(\xx_0,\DD_\theta(\xx_t,t)) \},
\end{align}
using $\ell_{\phi}$ in place of $\ell_{\MSE}$ is equivalent to scaling the original FM loss gradient along the eigenvectors of $\phi^\top \phi$ by the corresponding eigenvalues. For instance, $\ell_{\phi}$ with \eqref{eq:hpf} amplifies gradient magnitudes along high-frequency features by $(\lambda + 1)$, and leaves gradient magnitudes along low-frequency features unchanged. This perspective provides another justification for using $\ell_{\phi}$, since using an appropriate preconditioning matrix can accelerate convergence \citep{kingma2015adam}.


\textbf{Ablations.} Tab.~\ref{tab:losses} compares our loss $\ell_{\bm{I} + \lambda \HPF}$ for $\lambda \in \{0.1,10,1000\}$ with $\ell_{\MSE}$ and the heuristic losses. If $\lambda$ is too small, $\ell_{\bm{I} + \lambda \HPF}$ has little improvement compared to $\ell_{\MSE}$, whereas if $\lambda$ is too large, $\phi$ becomes nearly singular, leading to a severe drop in the FID. Our loss with $\lambda = 10$ provides consistent improvement over $\ell_{\MSE}$, doing even better than PH and LPIPS. Indeed, in Fig.~\ref{fig:hpf_lmda} which visualizes FID change w.r.t. $\ell_{\MSE}$ for various values of $\lambda$, we observe $\lambda = 10$ provides the optimal performance across all datasets. Morever, CIFAR10 learning curves in Fig.~\ref{fig:curves} verify that $\ell_{\bm{I} + 10 \HPF}$ enjoys fast convergence compared to all other losses.


\subsection{Improving Learning} \label{sec:learning}


\subsubsection{Model Dropout}

\textbf{Previous practice.} Similar to \textit{simple diffusion} \citep{hoogeboom2023simple}, we find dropout to be highly impactful. There is little study on the impact of dropout in denoiser UNets for ReFlow. Dropout rates in ReFlow denoiser UNets are usually set to $0.15$ \citep{liu2022rf,liu2024slimflow}, or equal to the dropout rates of DMs that are used to initialize ReFlow denoisers \citep{lee2024rfpp}. For the EDM networks, dropout rates are $0.13$ on CIFAR10, $0.25$ on AFHQv2, and $0.05$ on FFHQ.

\textbf{Our improvement.} We observe that learning the OT ODE is a harder task than learning the diffusion probability flow ODE. For instance, at $t = 1$, the optimal DM denoiser only needs to predict the data mean $\EE_{\xx_0 \sim \PP_0}[\xx_0]$ for any input $\xx_1 \sim \PP_1$, but an optimal ReFlow denoiser has to directly map noise to image along the bijective OT map. This means we need a larger Lipschitz constant for the ReFlow denoiser \citep{salmona2022pushforward}, so we use smaller dropout rates during ReFlow training in favor of larger effective UNet capacity over stronger regularization.

\begin{table}[t]
\centering
\begin{minipage}{0.4\textwidth}
\begin{minipage}{0.9\textwidth}
\hspace{-2mm}
\includegraphics[width=1.0\linewidth]{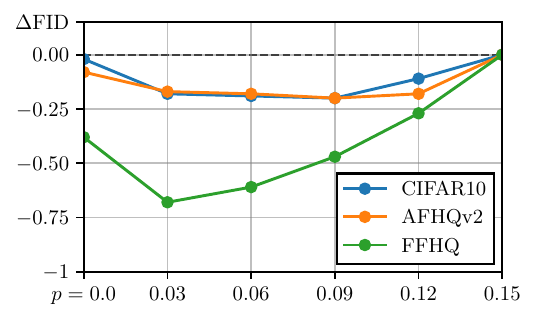}
\vspace{-2mm}
\captionof{figure}{Dropout $p$ ablation.}
\label{fig:dropout}
\vspace{5mm}
\end{minipage} \\
\begin{minipage}{0.9\textwidth}
\centering
\resizebox{0.9\textwidth}{!}{
\begin{tabular}{rccc}
\toprule
& \textbf{CIFAR10} & \textbf{AFHQv2} & \textbf{FFHQ} \\
\cmidrule{1-4}
RF $\ p=$ & $0.15$ & $0.15$ & $0.15$ \\
EDM $p=$ & $0.13$ & $0.25$ & $0.05$ \\
Ours $p=$ & $0.09$ & $0.09$ & $0.03$ \\
\bottomrule
\end{tabular}}
\caption{Dropout $p$ in each setting.}
\label{tab:opt_p}
\end{minipage}
\end{minipage} 
\begin{minipage}{0.55\textwidth}
\centering
\resizebox{1.0\textwidth}{!}{
\begin{tabular}{lccc}
\toprule
 & \textbf{CIFAR10} & \textbf{AFHQv2} & \textbf{FFHQ} \\
\cmidrule{1-4}
\textbf{Baseline (BSL)} & $2.83$ & $2.87$ & $4.28$ \\
\cmidrule{1-4}
\textbf{Dynamics (DYN)} & & & \\
BSL $+ \ w(\xx_t,t)$ & $2.61_{\triangledown 0.22}$ & $2.74_{\triangledown 0.13}$ & $3.83_{\triangledown 0.45}$ \\
BSL $+ \ w(\xx_t,t) + \TT$ & $2.62_{\triangledown 0.21}$ & $2.69_{\triangledown 0.18}$ & $3.77_{\triangledown 0.51}$ \\
BSL $+ \ w(\xx_t,t) + \TT + \ell_{\phi}$ & $2.58_{\triangledown 0.25}$ & $2.55_{\triangledown 0.32}$ & $3.69_{\triangledown 0.59}$ \\
\cmidrule{1-4}
\textbf{Learning (LRN)} & & & \\
BSL $+$ Optimal $p$ (OP) & $2.63_{\triangledown 0.20}$ & $2.67_{\triangledown 0.20}$ & $3.60_{\triangledown 0.68}$ \\
BSL $+$ OP $+$ Forward & $2.57_{\triangledown 0.26}$ & $2.63_{\triangledown 0.24}$ & $3.60_{\triangledown 0.68}$ \\
BSL $+$ OP $+$ Projected & $2.57_{\triangledown 0.26}$ & $2.62_{\triangledown 0.25}$ & $3.58_{\triangledown 0.70}$ \\
\cmidrule{1-4}
\textbf{DYN \& LRN} & & & \\
DYN $+$ OP & $\second{2.43}_{\triangledown 0.40}$ & $2.53_{\triangledown 0.34}$ & $3.17_{\triangledown 1.11}$ \\
DYN $+$ OP $+$ Forward & $\first{2.38}_{\triangledown 0.45}$ & $\first{2.44}_{\triangledown 0.43}$ & $\second{3.14}_{\triangledown 1.14}$ \\
DYN $+$ OP $+$ Projected & $\first{2.38}_{\triangledown 0.45}$ & $\second{2.47}_{\triangledown 0.40}$ & $\first{3.13}_{\triangledown 1.15}$ \\
\bottomrule
\end{tabular}}
\caption{Summary of our training improvements. Subscripts denote FID improvement w.r.t. baseline. Evaluated with sigmoid discretization (see Append. \ref{append:power}).}
\label{tab:summary}
\end{minipage}
\vspace{-2mm}
\end{table}

\begin{figure}[t]
\centering
\includegraphics[width=\linewidth]{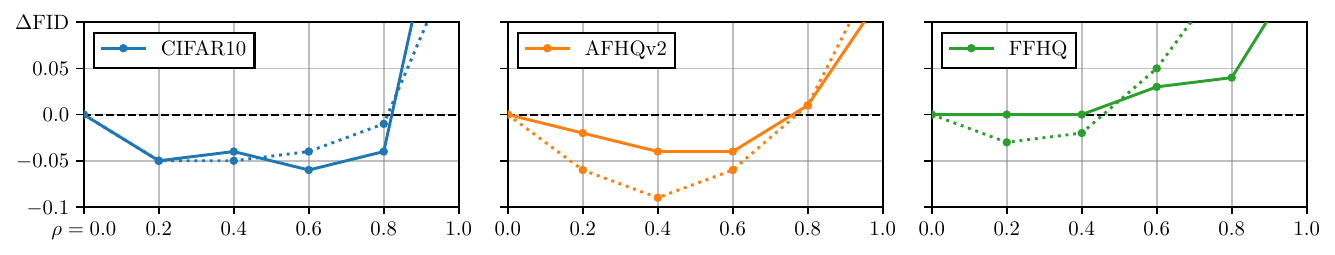}
\caption{$\rho$ ablation. Solid and dotted lines show results w/o and with improved dynamics, resp.}
\label{fig:rho}
\vspace{-4mm}
\end{figure}

\textbf{Ablations.} To verify that smaller dropout rates are beneficial, we return to the baseline training setting (Tab.~\ref{tab:choices}), and run a grid search over dropout probability $p \in [0,0.15]$. In Fig.~\ref{fig:dropout}, which shows FID change w.r.t. baseline $p = 0.15$, we find that smaller $p$ is always beneficial. In fact, optimal $p$ are even smaller than those used to train EDM denoisers, despite using the same architecture (Tab.~\ref{tab:opt_p}). FIDs after applying optimal dropout to baseline are written in row BSL+OP of Tab.~\ref{tab:summary}. We also observe in row DYN+OP that optimal dropout rates can be combined with improved dynamics to further enhance performance without additional grid search over $p$.


\subsubsection{Training Coupling}

\textbf{Previous practice.} A common practice is to generate a large number of pairs from $\QQ_{01}^1$ by solving the diffusion probability flow ODE \eqref{eq:diff_pfode} backwards, \ie, from noise to data, and use the generated set as an empirical approximation of $\QQ_{01}^1$ throughout training \citep{liu2022rf,lee2024rfpp,liu2024slimflow,liu2024instaflow}. However, the set of generated $\xx_0$ is only an approximation of the true marginal $\PP_0$, so naively training with generated data will accumulate error on the marginal at $t = 0$.

\textbf{Our improvement -- \textit{forward pairs}.} To mitigate error accumulation at $t = 0$, we incorporate pairs generated by solving the diffusion probability flow ODE forwards, starting from data, coined forward pairs. We assert forward pairs can be helpful, as $\xx_0$ are exactly data points.

To use forward pairs, we first invert the training sets for each dataset, which yields additional $50k$ pairs for CIFAR10, $13.5k$ pairs for AFHQv2, and $70k$ pairs for FFHQ. Due to the small number of forward pairs, we use them in combination with backward pairs, and to prevent forward pairs from being ignored due to the large number of backward pairs, we sample forward pairs with probability $\rho$ and backward pairs with probability $1-\rho$ at each step of the optimization.

\textbf{Our improvement -- \textit{projected pairs}.} We also propose projecting the coupling $\QQ_{01}^1$ to $\Pi(\PP_0,\PP_1)$, the set of joint distributions with marginals $\PP_0$ and $\PP_1$, by solving the optimization problem
\begin{align}
\textstyle \widehat{\QQ}_{01}^1 = \argmin_{\Gamma_{01}} W_p(\Gamma_{01} , \QQ_{01}^1) \quad s.t. \quad \Gamma_{01} \in \Pi(\PP_0,\PP_1) \label{eq:projection}
\end{align}
where $W_p$ is the $p$-Wasserstein distance \citep{villani2009ot}, and using the projected coupling $\widehat{\QQ}_{01}^1$ in place of the original during training. Intuitively, this procedure can be understood as fine-tuning the generated marginals to adhere to the true marginals without losing the coupling information in $\QQ_{01}^1$. The full optimization procedure is described in Appendix \ref{append:settings}.

\textbf{Ablations.} In Fig.~\ref{fig:rho}, we see that it is always beneficial to use forward pairs, as long is $\rho$ is not too high, \eg, $\rho \leq 0.5$. Otherwise, the model starts overfitting to the forward pairs. Interestingly, on FFHQ, using forward pairs without improved training dynamics has no improvement in the FID, implying that improved dynamics may be necessary to make the best out of the rich information contained in the forward pairs. In rows BSL+OP+Projected and DYN+OP+Projected of Tab.~\ref{tab:summary}, we observe that projected pairs also offer improvements in the FID score across all three datasets.


\begin{table}[t]
\centering
\begin{minipage}{0.35\textwidth}
\centering
\vspace{3mm}
\hspace{-2mm}
\resizebox{1.02\textwidth}{!}{
\begin{tabular}{lccc}
\toprule
 & \textbf{CIFAR10} & \textbf{AFHQv2} & \textbf{FFHQ} \\
\cmidrule{1-4}
Unif. & $2.36_{\triangledown 0.47}$ & $2.34_{\triangledown 0.53}$ & $2.97_{\triangledown 1.31}$ \\
EDM   & $2.80_{\triangledown 0.03}$ & $3.61_{\vartriangle 0.74}$ & $6.78_{\vartriangle 2.50}$ \\
\cmidrule(lr){1-4}
\textbf{Ours} \\
$\kappa = 10$ & $\second{2.31}_{\triangledown 0.52}$ & $\second{2.31}_{\triangledown 0.56}$ & $\second{2.87}_{\triangledown 1.41}$ \\
$\kappa = 20$ & $\first{2.23}_{\triangledown 0.60}$ & $\first{2.30}_{\triangledown 0.57}$ & $\first{2.84}_{\triangledown 1.44}$ \\
$\kappa = 30$ & $2.45_{\triangledown 0.38}$ & $2.78_{\triangledown 0.09}$ & $3.32_{\triangledown 0.96}$ \\
\bottomrule
\end{tabular}}
\caption{Various discretizations applied to our best models and DPM-Solver with $r = 0.4$.}
\label{tab:disc_dpm}
\end{minipage} \hspace{1mm}
\begin{minipage}{0.31\textwidth}
\centering
\includegraphics[width=1.0\linewidth]{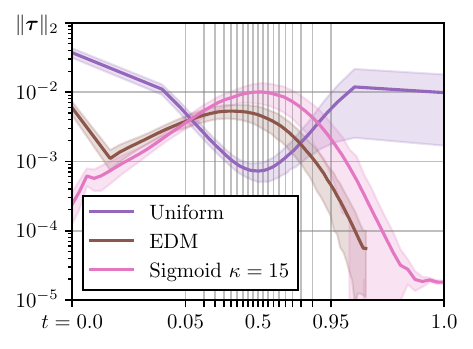}
\vspace{-6mm}
\captionof{figure}{Avg. truncation \\ error.}
\label{fig:truncation}
\end{minipage}
\hspace{-2mm}
\begin{minipage}{0.31\textwidth}
\centering
\includegraphics[width=1.0\linewidth]{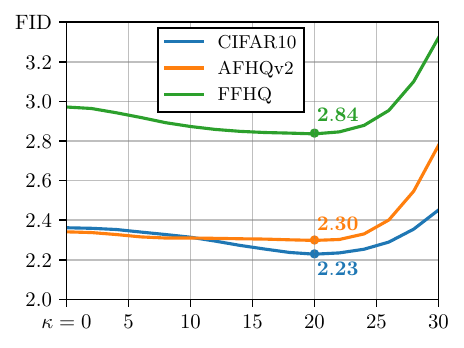}
\vspace{-6mm}
\captionof{figure}{$\kappa$ ablation.}
\label{fig:kappa}
\end{minipage}
\vspace{-3mm}
\end{table}

\subsection{Improving Inference} \label{sec:inference}

\begin{wrapfigure}{r}{0.28\linewidth}
\vspace{-5mm}
\centering
\includegraphics[width=1.0\linewidth]{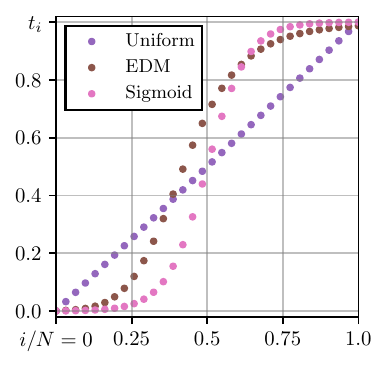}
\vspace{-6mm}
\caption{Discretizations.}
\label{fig:discs}
\vspace{-4mm}
\end{wrapfigure}

\textbf{Previous practice.} To generate data after ReFlow, previous works often use an uniform discretization $\{t_i = i/N : i = 0, \ldots, N\}$ of $[0,1]$ along with the Euler or Heun to integrate \eqref{eq:fm_ode} from $t = 1$ to $0$ \citep{liu2022rf,liu2024instaflow,lee2024rfpp,liu2024slimflow}.

\textbf{Our improvement.} As the ReFlow ODE converges to the OT ODE, we assert that high-curvature regions in ODE paths now occur near $t \in \{0,1\}$. While the previously proposed EDM schedule
\begin{align*}
\textstyle t_0 = 0, \quad t_i = \frac{\sigma_i}{\sigma_i + 1} \ \ \textit{where} \ \ \sigma_i = (\sigma_{\min}^{1/d} + \frac{i}{N} (\sigma_{\max}^{1/d} - \sigma_{\min}^{1/d}))^d
\end{align*}
for solving diffusion probability flow ODEs emphasizes $t \in \{0,1\}$, we note that it does not perform better than the uniform discretization, as shown in Tab.~\ref{tab:disc_dpm}. Similar to \cite{lin2024flawed}, we speculate that this is because $t_N < 1$. Specifically, 
 $\vv_\theta(\xx_1,t_N) \neq \vv_\theta(\xx_1,1)$ since $t_N \neq 1$, but the integration of the ODE is done with $\vv_\theta(\xx_1,t_N)$ in place of $\vv_\theta(\xx_1,1)$, leading to erroneous ODE trajectories.

Instead of tuning $\{\sigma_{\min},\sigma_{\max},d\}$ to address this problem, we propose a simple sigmoid schedule
\begin{align}
    \textstyle \{t_i = (\sig(\kappa(\frac{i}{N}-0.5)) - \sig(-\frac{\kappa}{2})) / (\sig(\frac{\kappa}{2}) - \sig(-\frac{\kappa}{2})) : i = 0, \ldots, N\} \label{eq:sigmoid_disc}
\end{align}
with one parameter $\kappa$ which controls the concentration of $t_i$ at $t \in \{0,1\}$. Here, $\sig$ is the sigmoid function. As $\kappa \rightarrow 0$, $\{t_i\}$ converges to the uniform discretization, and as $\kappa \rightarrow \infty$, all $t_i$ with $i < N/2$ will converge to $0$, and all $t_i$ with $i > N/2$ will converge to $1$.

To solve the ODE \eqref{eq:fm_ode}, we consider DPM-Solver \citep{lu2022dpmsolver} with the update rule
\begin{align}
\textstyle \xx_{t_i} \leftarrow \xx_{t_{i+1}} + (t_i - t_{i+1}) (\frac{1}{2r} \vv_\theta(\xx_{s_{i+1}},s_{i+1}) + (1 - \frac{1}{2r}) \vv_\theta(\xx_{t_{i+1}},t_{i+1}))
\end{align}
where $s_{i+1} = t_i^r t_{i+1}^{1-r}$ and $r \in (0,1]$. We recover the second order Heun update (the baseline solver) with $r = 1$, but we assert that we can obtain better performance by tuning $r$.

\textbf{Ablations.} In Tab.~\ref{tab:disc_dpm}, we display results for solving \eqref{eq:fm_ode} with various discretizations and DPM-Solver with $r = 0.4$. First, row $\kappa = 10$ shows that we can indeed mitigate the timestep mismatch problem in the EDM schedule. It also shows we can gain improvements by using $r < 1$ (in the baseline setting, we use the sigmoid schedule with $\kappa = 10$ and Heun). See Appendix \ref{append:solver_guidance_ablation} for a full ablation over $r$. Second, rows $\kappa = 20$, $30$ tells us we can get even better results by increasing sharpness, but too large $\kappa$ hurts performance.

To investigate the performance difference between discretizations, we visualize local truncation error $\|\bm{\tau}\|_2$ in Fig.~\ref{fig:truncation}, where $\bm{\tau}$ given time-step $t_i$ and $\xx_{t_{i+1}} \sim \QQ_{t_{i+1}}$ is defined as
\begin{align*}
\textstyle \bm{\tau} = (\xx_{t_{i+1}} + (t_i - t_{i+1}) \vv_\theta(\xx_{t_{i+1}},t_{i+1})) - \solve(\xx_{t_{i+1}},\DD_\theta,t_{i+1},t_i).
\end{align*}
We first note that the uniform distribution incurs large error near $t \in \{0,1\}$. This highlights that we indeed must place more points near those $t$ in order to control discretization error. While the EDM schedule has less error at those regions, because $t_N \neq 1$, the mismatch between the initial state $\xx_1$ and time $t_N$ does not ensure the ODE is solved properly. Finally, we see that our schedule is able to control the error at the extremes. While the error for our schedule increases near $t \approx 0.5$, Fig.~\ref{fig:kappa} tells us we can sacrifice accuracy at intermediate $t$ to prioritize perceptual quality by choosing a large $\kappa$.

\begin{table}[t]
\centering
\resizebox{1.0\textwidth}{!}{
\addtolength{\tabcolsep}{-0.9mm}
\begin{tabular}{lccccccccccccc}
\toprule
\multirow{2}{*}{\textbf{Method}} & \multicolumn{3}{c}{\textbf{CIFAR10}} & \multicolumn{3}{c}{\textbf{AFHQv2}} & \multicolumn{3}{c}{\textbf{FFHQ}} & \multicolumn{3}{c}{\textbf{ImageNet (cond.)}} & \multirow{2}{*}{\textbf{Reference}} \\
\cmidrule(lr){2-4} \cmidrule(lr){5-7} \cmidrule(lr){8-10} \cmidrule(lr){11-13}
& NFE & FID & STN & NFE & FID & STN & NFE & FID & STN & NFE & FID & STN \\
\cmidrule{1-14}
\textbf{DM ODE} \\
EDM & $35$ & $1.97$ & $14.19$ & $79$ & $1.96$ & $28.41$ & $79$ & $2.39$ & $27.15$ & $79$ & $2.30$ & $26.76$ & \citep{karras2022edm} \\
 & $9$ & $37.91$ & -- & $9$ & $28.03$ & -- & $9$ & $56.84$ & -- & $9$ & $35.46$ & -- & \dittotikz \\
DPM-Solver & 9 & $4.98$ & -- & -- & -- & -- & 9 & $9.26$ & -- & 9 & $6.64$ & -- & \citep{lu2022dpmsolver} \\
AMED-Solver & 9 & $2.63$ & -- & -- & -- & -- & 9 & $4.24$ & -- & 9 & $5.60$ & -- & \citep{zhou2024amed} \\
\cmidrule{1-14}
\textbf{FM ODE} \\
MinCurv & 9 & $8.76$ & $5.87$ & 9 & $13.63$ & $10.45$ & 9 & $10.44$ & $10.49$ & -- & -- & -- & \citep{lee2023curvature} \\
FM-OT & 142 & $6.35$ & -- & -- & -- & -- & -- & -- & -- & 138 & $14.45$ & -- & \citep{lipman2023fm} \\
OT-CFM & 100 & $4.44$ & -- & -- & -- & -- & -- & -- & -- & -- & -- & -- & \citep{tong2023cfm} \\
MOT-50 & -- & -- & -- & -- & -- & -- & -- & -- & -- & 132 & $11.82$ & -- & \citep{pooladian2023mfm} \\
FM*    & 100 & $2.96$ & $10.73$ & 100& $2.73$ & $16.20$ & 100& $3.30$ & $16.71$ & -- & -- & -- & Baseline \\
MOT-512*  & 100 & $3.29$ & $8.77$ & 100& $5.53$ & $13.45$ & 100& $4.69$& $14.29$& -- & -- & -- & \dittotikz \\
MOT-1024* & 100 & $3.18$ & $8.59$ & 100& $5.83$& $13.45$  & 100& $4.84$ & $14.07$& -- & -- & -- & \dittotikz \\
MOT-4096* & 100 & $3.16$ & $8.34$ & 100& $6.18$& $12.68$  & 100&  $4.92$ & $13.47$ & -- & -- & -- & \dittotikz \\
ReFlow & 110 & $3.36$ & -- & -- & -- & -- & -- & -- & -- & -- & -- & -- & \citep{liu2022rf} \\
Simple ReFlow* & 9 & $2.23$ & $1.64$ & 9 & $2.30$ & $3.30$ & 9 & $2.84$ & $2.87$ & 9 & $3.49$ & $2.72$ & Ours \\
\quad + Guidance* & 9 & $1.98$ & $2.49$ & 9 & $1.91$ & $5.60$ & 9 & $2.67$ & $3.24$ & 9 & $1.74$ & $3.92$ & \dittotikz \\
\bottomrule
\end{tabular}}
\caption{Comparison of neural ODE methods. MOT-$b$ is minibatch OT with minibatch size $b$, and MinCurv is curvature minimizing flow. We report FID and straightness (STN): $\textstyle S(\vv_\theta) \coloneqq \int_0^1 \EE \left[ \| (\xx_1 - \xx_0) - \vv_\theta(\xx_t,t) \|_2 \right] \, \mathrm{d}t $. Star * next to a method denotes our training results.}
\label{tab:flow_comparison}
\vspace{-3mm}
\end{table}

\section{Applications} \label{sec:applications}

\textbf{Comparison to other fast flow methods.} Our approach significantly outperforms other ODE approaches, \eg, minibatch-OT FM \citep{pooladian2023mfm, tong2023cfm} and curvature minimization \citep{lee2023curvature}, see see Tab. \ref{tab:flow_comparison}. Where possible, we report straightness \citep{liu2022rf}, which quantifies how an ODE trajectory deviates from a straight line between its initial and terminal points (see Tab. \ref{tab:flow_comparison} and \eqref{eq:straightness} and further details in Appendix \ref{append:settings}). We attribute the inferior baseline performance due to bias in minibatch OT, and discuss this and other pitfalls in Appendix \ref{append:couplings}.

\begin{wrapfigure}{r}{0.3\linewidth}
\vspace{-4mm}
\centering
\includegraphics[width=1.0\linewidth]{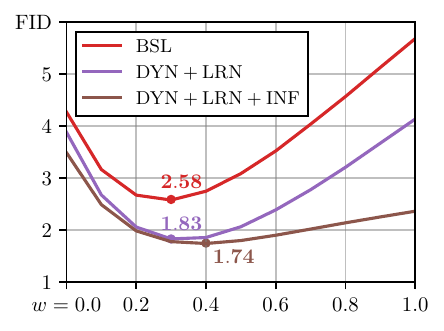}
\vspace{-6mm}
\caption{ImageNet-64 FID at 9 NFEs without and with our improvements.}
\label{fig:imagenet}
\vspace{-4mm}
\end{wrapfigure}

\textbf{Improving perceptual quality via guidance.} DMs often use guidance such as classifier-free guidance (CFG) \citep{ho2022cfg} or autoguidance (AG) \citep{karras2024autog} to enhance the perceptual quality of samples. As observed by \cite{liu2024instaflow}, conditional ReFlow models can also easily be combined with CFG. While it is unclear what effect guidance has on the marginals of ReFlow models, we also try applying AG / CFG to our best unconditional / conditional models, since perceptual quality may be of interest for certain downstream tasks. We already achieve state-of-the-art results for fast ODE-based generation with our improved choices, but we obtain even lower FID scores with guidance, as shown in the last row of Tab.~\ref{tab:flow_comparison}. See Appendix \ref{append:solver_guidance_ablation} for a full ablation over guidance strength.

\textbf{ReFlow on class-conditional ImageNet-64.} To verify the scalability of our improvements, we apply our training dynamics (DYN), learning (LRN), and inference (INF) choices to ReFlow with the class-conditional ImageNet-64 EDM model. We use $8M$ backward pairs, $4M$ forward pairs, and $\rho=0.2$ for this experiment.
Fig.~\ref{fig:imagenet} at CFG scale $w = 0$, \ie, no guidance, confirms that our techniques are effective. DYN+LRN improves BSL FID from $4.27$ to $3.91$, and INF further improves the FID to $3.49$, which is better than FIDs of state-of-the-art fast flow methods. Finally, with CFG $w = 0.4$, our DYN+LRN+INF model achieves an even better FID score of $1.74$. Our techniques also consistently improve CFG FIDs, implying that they offer orthogonal benefits.

\section{Conclusion}

In this paper, we decompose the design of ReFlow into three groups -- training dynamics, learning, and inference. Within each group, we examine previous practices and their potential pitfalls, which we overcome with seven improved choices for loss weight, time distribution, loss function, model dropout, training data, ODE discretization and solver. We verify the robustness of our techniques via extensive analysis on CIFAR10, AFHQv2, and FFHQ, and their scalability by ReFlow on ImageNet-64. Our techniques yield state-of-the-art results among fast neural ODE methods, without latent-encoders, perceptual losses, or premetrics.





{
\small
\bibliography{iclr2025_conference}
\bibliographystyle{iclr2025_conference}
}

\appendix



\section{Faster ODEs via Couplings} \label{append:couplings}

\subsection{Faster sampling via straight paths}

Generative ODEs with straight trajectories can be solved accurately with substantially fewer velocity evaluations than those with high-curvature trajectories \citep{numanalysis}. In fact, probability flow ODEs with perfectly straight trajectories can translate one distribution to another with a single Euler step. For an ODE with velocity $\vv_\theta$, we can quantify its straightness as \citep{liu2022rf}:
\begin{align}
\textstyle S(\vv_\theta) \coloneqq \int_0^1 \EE \left[ \| (\xx_1 - \xx_0) - \vv_\theta(\xx_t,t) \|_2 \right] \, \mathrm{d}t \label{eq:straightness}
\end{align}
where the expectation is over ODE trajectories $\{\xx_t : t \in [0,1]\}$ generated by $\vv_\theta$. An ODE with zero straightness has linear trajectories, which means it can translate initial points to terminal points with a single function evaluation.

One approach to encourage straight paths is to learn ODEs which minimize trajectory curvature \citep{lee2023curvature} by parameterizing the coupling with a neural network which takes as input some image and outputs a sample such the distribution of these samples is close to Gaussian. Another approach is based on connections to optimal transport.

\subsection{Connection to Optimal Transport}
For any convex ground cost, the solution to the dynamic optimal transport on continuous support will be straight trajectories, see e.g. \cite{liu2022rfmp}. In addition, training a flow matching model on samples from an optimal transport coupling will preserve the coupling, providing the vector field of the flow matching model is of a specific form as detailed in Section \ref{append:c_rf}.

As an aside, more generally performing bridge matching \citep{peluchettinon} on an entropically-regularized optimal coupling will preserve the coupling, though the trajectory will be given by an SDE, and hence no longer straight. In the limit as the entropic regularization term tends to zero, then this recovers the non-regularized optimal coupling with squared Euclidean ground cost \citep{shi2023dsbm, peluchetti2023diffusion}.

This motivates learning an optimal transport (OT) coupling and then performing bridge / flow matching on this coupling. There are two dominant ways to do this, either ReFlow (also known as Iterative Markovian fitting) \citep{liu2022rf,liu2022rfmp,lee2024rfpp, shi2023dsbm} or approximating the coupling using mini-batches \citep{tong2023cfm,pooladian2023mfm}.

\subsubsection{Mini-batch Optimal Transport Flow Matching} \label{append:minibatch_flow}
We first make a distinction between the loss batch size $b_\text{loss}$ and coupling batch size $b_\text{coupling}$ where $b_\text{coupling} \geq b_\text{loss}$. The loss batch size, $b_\text{loss}$, is the number of input pairs $(x_0,x_1)$ used per training iteration in the flow matching loss \eqref{eq:fm_loss}, whereas the coupling batch size, $b_\text{coupling}$, refers to the number of independently sampled pairs used as input into the mini-batch OT solver.

The procedure for obtaining mini-batch couplings is to first independently sample $b_\text{coupling}$ items denoted $(y_i)_{i=1}^{b_\text{coupling}}$ $(x_i)_{i=1}^{b_\text{coupling}}$ from both marginal distributions $x_i \sim \mathbb{P}_0$ and $y_i \sim \mathbb{P}_1$. The next step is to run a mini-batch OT solver to obtain a coupling matrix $P=(p_{i,j})_{i=1, j=1}^{b_\text{coupling}}$ such that $\sum_{i,j} p_{i,j}=1$, $\sum_{i} p_{i,j}=1/b_\text{coupling}$, $\sum_{j} p_{i,j}=1/{b_\text{coupling}}$.  

The final step is to sub-sample the coupling batch of size $b_\text{coupling}$ to obtain $b_\text{loss}$ aligned pairs $(\tilde{x}_i,\tilde{y}_i)_{i=1}^{b_\text{loss}}\sim P$, which are then fed into loss \eqref{eq:fm_loss} and corresponding standard gradient based optimization procedure.

\textbf{Mini-batch bias}. In stochastic gradient descent, for example, losses computed on uniformly sampled batches are unbiased with respect to the measures the batches were sampled from. This is not true for mini-batch OT couplings with respect to the true OT coupling between marginals \citep{bellemare2017cramer}. Indeed, marginal preservation within mini-batches may force points in each minibatchto be mapped together, such points may not be mapped, or have very low probability of being mapped, in the true OT coupling. Asymptotically the mini-batch couplings should converge to the true OT coupling solvers the mini-batch size increases. Unfortunately, this is not practically feasible with discrete OT solvers for large datasets or indeed for measures with continuous support. The mini-batch and true OT couplings would also be the same for infinite regularization, indeed the couplings would both be independent couplings and so not very informative.

\textbf{Subsampling}. 
Computing OT couplings for large batch sizes is not typically possible using the Hungarian algorithm \citep{hungarian} due to the cubic time complexity. However, entropic approximations from Sinkhorn \citep{sinkhorn} is of only quadratic complexity and can be implemented on modern GPU-accelerators \citep{cuturi2013lightspeed, cuturi2022optimal}, hence enables fast computation of discrete entropic OT for batches in excess of $100,000$ points.

Prior works \citep{tong2023cfm, pooladian2023mfm} set $b_\text{loss}=b_\text{coupling}$ and do not subsample. This is problematic as mini-batch OT is only justified as being close to optimal in the asymptotically large batch regime. Although we can compute the coupling for large batch sizes, the optimization setup for training the neural network via FM is limited by hardware memory and so it becomes infeasible to set $b_\text{loss}=b_\text{coupling}$ for large batch size. Prior works therefore use small coupling batch size.

Subsampling should still preserve marginal distributions. We observe in Tab. \ref{tab:flow_comparison} that the straightness of the generative trajectories increases as batch size grows, as expected, however generative performance in terms of FID gets increasingly worse compared to regular flow matching. This is a surprising empirical result that warrants further investigation.



\subsubsection{ReFlow and Iterative Markovian Fitting} \label{append:c_rf}
ReFlow \citep{liu2022rf,liu2022rfmp,lee2024rfpp} and more generally Iterative Markovian Fitting \citep{shi2023dsbm} are procedures which iteratively refine the coupling between marginals. We shall focus on ReFlow for brevity. ReFlow first takes an independent coupling, then involves training a flow between samples from that coupling, known as a Markovian projection. Simulating from this trained flow is then used to define an updated coupling. This process is repeated between updating a flow and coupling until convergence. It has been shown that this process iteratively reduces the transport cost for any convex ground cost, and hence straightens the paths between coupling whilst retaining the correct marginals.

Note that ReFlow results in a coupling which is slightly stronger than optimal transport. OT aims to minimize the transport cost for a specific ground cost function, whereas ReFlow reduces transport cost for all convex costs. However, ReFlow can be limited to a specific cost by ensuring the vector field takes a specific conservative form \citep{liu2022rfmp}.

\section{Fast Sampling via Higher Order Solvers}

One can use higher-order solvers which utilize higher order differentials of the ODE velocity to take large integration steps or reduce truncation error \citep{karras2022edm,dockhorn2022genie,lu2022dpmsolver,zhang2023deis}. While this approach is generally training-free, recent works \citep{zhou2024amed,kim2024dode} have incorporated trainable components which minimize truncation error to further accelerate sampling.


\section{Distillation and Consistency Models}
The goal of distillation is to compress multiple probability flow ODE steps of a teacher diffusion model into a single evaluation of a student model. Representative methods are progressive distillation \citep{salimans2022pd}, and consistency distillation \citep{song2023cm,song2024icm,kim2024ctm,geng2024ect}. While distillation and ReFlow are similar in the aspect that they train a new model using a teacher diffusion model, we emphasize that they are, in fact, orthogonal approaches, and can benefit from one another. We discussion this point in more detail in the following section.

\subsection{ReFlow vs. Distillation}

We also remark that faster ODEs have several practical benefits over distillation. Since translation along an ODE is a bijective map, we can achieve fast inversion and likelihood evaluation by integrating the ODE backwards starting from data. Furthermore, faster ODEs can be combined with distillation or fast samplers to achieve a greater effect. For instance, \cite{lee2023curvature,liu2024instaflow,liu2024slimflow} have observed it is substantially easier to distill ODE models with straight trajectories. \cite{lee2023curvature,lee2024rfpp} also report combining RF models with higher-order solvers improves the trade-off between generation speed and quality.

\section{Experiment Settings} \label{append:settings}

\begin{table}[t]
\centering
\resizebox{0.6\textwidth}{!}{
\begin{tabular}{lcccc}
\toprule
 & \textbf{CIFAR10} & \textbf{AFHQv2} & \textbf{FFHQ} & \textbf{ImageNet-64} \\
\cmidrule{1-5}
Iterations & $200k$ & $200k$ & $200k$ & $500k$ \\
Minibatch Size & 512 & 256 & 256 & 1024 \\
Adam LR & \num{2e-4} & \num{2e-4} & \num{2e-4} & \num{2e-4} \\
Label dropout & -- & -- & -- & 0.1 \\
EMA & 0.9999 & 0.9999 & 0.9999 & 0.9999 \\
Num. Backward Pairs & $1M$ & $1M$ & $1M$ & $8M$ \\
Num. Forward Pairs & $50k$ & $13.5k$ & $70k$ & $4M$ \\
\bottomrule
\end{tabular}}
\caption{Training hyper-parameters.}
\label{tab:train_params}
\vspace{-3mm}
\end{table}

\subsection{Training and Evaluation}

To evaluate a training setting, we initialize ReFlow denoisers with pre-trained EDM \citep{karras2022edm} denoisers, and optimize \eqref{eq:gen_reflow_loss} with $\QQ_{01} = \QQ_{01}^1$ of \eqref{eq:diff_coupling}. Specific optimization hyper-parameters are reported in Tab.~\ref{tab:train_params}. We sample backward and forward pairs from $\QQ_{01}^1$ by solving \eqref{eq:diff_pfode} with EDM models and use them throughout training. Specifically, we use the EDM discretization with the Heun solver \citep{ascher1998heun}. We use sampling budgets of 35 NFEs for CIFAR10 and 79 NFEs for AFHQv2, FFHQ, and ImageNet. FIDs of backward training samples are reported in the first row of Tab.~\ref{tab:flow_comparison}.

We measure the generative performance of the optimized model by computing the FID \citep{heusel2017fid} between $50k$ generated images and all available dataset images. Inception statistics are computed using the pre-trained Inception-v3 model \citep{karras2021stylegan3}. Samples are generated by solving \eqref{eq:fm_ode} with the Heun solver with 9 NFEs, and we report the minimum FID score out of three random generation trials, as done by \cite{karras2022edm}. \uline{For reasons described in Appendix \ref{append:power}, we use the sigmoid discretization instead of the baseline uniform discretization.}

\subsection{Flow Matching Baselines}
We strove to obtain competitive baselines for base and mini-batch OT flow matching methods, and indeed achieved superior performance to comparable implementations from \cite{tong2023cfm} on the datasets considered.

Firstly, similar to \cite{karras2022edm}, we formulate flow matching as $x_0$ or \textit{mean}-prediction rather than using regression target $X_0-X_1$. We parameterize the mean-prediction to be of form $D_\theta(x_t,t)= c_\text{skip}(t)x_t+c_\text{out}(t)F_\theta(c_\text{in}(t), c_\sigma(t))$ where $F_\theta$ is a neural network:  
\begin{equation}\label{eq:karras_loss}
    \mathbb{E}_{t, \mathbf{X}_t, \mathbf{X}_0} \lambda(t)\|D_\theta(\mathbf{X}_t,t) - \mathbf{X}_0\|^2.
\end{equation}

The scalar functions $c_\sigma, c_\text{skip}, c_\text{out}, c_\text{in}, \lambda(t)$ are derived according to the reasoning of \cite{karras2022edm} in Sec \ref{append:precondition}. We set $\sigma_{0,T}=0$ for the independent coupling.

Throughout we use the a similar setup as \cite{karras2022edm} but with the flow matching loss and new preconditioning. In particular for AFHQv2, FFHQ and CIFAR10 we use the \textit{SongNet} from \cite{song2021sde} with corresponding hyperparameters from \cite{karras2022edm} per dataset. 

The time-sampling during training is taken to be uniform and the Euler solver with $100$ steps is used for computing FID and straightness metrics, in order to be comparable to other reported baselines from \cite{tong2023cfm}.

\subsubsection{Mini-batch Flow Matching}

The mini-batch flow matching experiments use the same learning rate, networks, and training objectives as base flow matching. The primary difference is in how the inputs, $\mathbf{X}_t, \mathbf{X}_0$, are sampled. 

We follow the procedure outlined in Sec. \ref{append:minibatch_flow} for sampling mini-batches, using Sinkhorn \citep{sinkhorn, cuturi2013lightspeed} as the mini-batch solver based on the  OTT-JAX library \citep{cuturi2022optimal}. Images were scaled to $[-1,1]$ as is standard in diffusion models and flattened. The squared Euclidean ground cost was used. 

The regularization parameter was set to $\epsilon=2$, qualitatively this provided a reasonable trade-off between meaningful coupling visually and the time to compute using convergence threshold defaults from \cite{cuturi2022optimal}. The default regularization parameter from \cite{cuturi2022optimal} did not provide a visually meaningful coupling on large batches, and setting parameter less than $\epsilon < 1$ took over the maximum iteration threshold of $2,000$ iterations to converge, and hence was not feasible for training.

Each Sinkhorn loop took approximately $100-200$ Sinkhorn iterations without acceleration techniques, and wall-clock time up to roughly $0.8$s for the largest coupling batch size $8192$. We then ran acceleration techniques including Anderson acceleration \citep{andersonacc65} with memory $2$, epsilon decay starting from $10$, and initializing potentials from prior batches to reduce runtime. This sped up the mini-batch process to $0.4$s per Sinkhorn loop, and convergence of Sinkhorn in approximately $20-30$ Sinkhorn iterations. 

We ablated the coupling batch size between $512,1024, 4096, 8192$. The loss batch size was kept constant at $512$ for CIFAR10 and $256$ for AFHQV2 and FFHQ.

\textbf{Scope for further improvements:} Unfortunately, $\mathbb{C}ov[X_0, x_T] = \sigma_{0,T}^2$ is not known for mini-batch couplings and hence as in the independent coupling we set $\sigma_{0,T}=0$ in the preconditioning computation. It is possible that this is sub-optimal and may be estimated in better ways.

Although straightness improves with larger batch size and our implementation achieves better FID scores than prior baselines, mini-batch OT flow matching is still not well understood. It is puzzling as to why performance in terms of FID gets worse compared to base flow matching. This is corroborated in Table 5 of \cite{tong2023cfm} where FID for CIFAR10 is $3.74$ with the mini-batch coupling and $3.64$ with independent coupling, however we notice a significant discrepancy at $2.98$ FID for the independent coupling and $3.16$ for the best mini-batch coupling. We leave further investigations to future work.

\subsubsection{Preconditioning} \label{append:precondition}
 In the interest of generality, we derive EDM-style preconditioning \citep{karras2022edm} for the more general case of bridge matching / stochastic interpolant \citep{peluchetti2023diffusion, peluchettinon, shi2023dsbm, albergo2023stochastic} which recovers preconditioning for flow matching for $\gamma_t=0$,

Let $\mathbf{X}_t = \alpha_t \mathbf{X}_0 + \beta_t \mathbf{X}_T + \gamma_t \epsilon$
where $\epsilon \sim \mathcal{N}(0, \mathbb{I})$. Consider prediction of form $D_\theta(x_t,t)= c_\text{skip}(t)x_t+c_\text{out}(t)F_\theta(c_\text{in}(t), c_\sigma(t))$ and $\lambda(\cdot)$ weighted loss per $\eqref{eq:karras_loss}$.

The loss per $\eqref{eq:karras_loss}$ may be written: 
\begin{align}
    \mathbb{E}_{t, \mathbf{X}_t, \mathbf{X}_0} \lambda(t) c_\text{out}(t)^2\| F_\theta(\mathbf{X}_t,t) - c_\text{out}(t)^{-1}(\mathbf{X}_0-c_\text{skip}(t)\mathbf{X}_t)\|^2
\end{align}

\textbf{Setting } $\lambda(\cdot)$. In order to uniformly weight the loss per time step, we set $\lambda(t)= c_\text{out}(t)^{-2}$ similarly to \cite{karras2022edm}.

\textbf{Setting } $c_\text{in}(t)$. We take the strategy of finding $c_\text{in}$ such that  $\mathbb{V}ar[c_\text{in}(t)\mathbf{X}_t]=1.$

Let $\mathbb{V}ar[\mathbf{X}_0] = \sigma_0^2$,  $\mathbb{V}ar[\mathbf{X}_T] = \sigma_T^2$ and  $\mathbb{C}ov[\mathbf{X}_0, x_T] = \sigma_{0,T}^2$

\begin{align}
     &\mathbb{V}ar[c_\text{in}(t)\mathbf{X}_t] = c_\text{in}(t)^2\left[\alpha_t^2 \sigma_0^2 + \beta_t^2 \sigma_0^2 + 2\alpha_t\beta_t \sigma_{0,T}^2+\gamma_t^2\right] = 1 \\ 
    &c_\text{in}(t) = \left[\alpha_t^2 \sigma_0^2 + \beta_t^2 \sigma_T^2 + 2\alpha_t\beta_t \sigma_{0,T}^2+\gamma_t^2\right]^{-\frac{1}{2}}
\end{align}

\textbf{Setting } $c_\text{skip}$ and $c_\text{out}$. The prediction target of $D_\theta(x_t,t)$ is $\mathbf{X}_0$, hence the target of network $F_\theta$ is $c_\text{out}(t)^{-1}\left[\mathbf{X}_0-c_\text{skip}(t)x_t\right]$. We choose $c_\text{skip}$ and $c_\text{out}$ to ensure regression target has uniform variance i.e. $\mathbb{V}ar\left[c_\text{out}(t)^{-1}\left[\mathbf{X}_0-c_\text{skip}(t)\mathbf{X}_t\right]\right] = 1$,

\begin{align}
    \mathbb{V}ar\left[c_\text{out}(t)^{-1}\left[\mathbf{X}_0-c_\text{skip}(t)\mathbf{X}_t\right]\right] =&\; 1 \\
    c_\text{out}(t)^{2} =&\; \mathbb{V}ar\left[\mathbf{X}_0-c_\text{skip}(t)\mathbf{X}_t\right]\\
     c_\text{out}(t)^{2} =& \;\mathbb{V}ar\left[(1-\alpha_tc_\text{skip}(t))\mathbf{X}_0-c_\text{skip}(t)(\beta_t \mathbf{X}_T + \gamma_t \epsilon)\right] \\
     c_\text{out}(t)^{2} =&\; (1-\alpha_tc_\text{skip}(t))^2 \sigma_0^2 \\
                            &- 2\beta_t(1-\alpha_tc_\text{skip}(t))c_\text{skip}(t)\sigma_{0,T}^2 \\
                            &+ c_\text{skip}(t)^2\beta_t^2 \sigma_T^2 + \gamma_t^2c_\text{skip}(t)^2
\end{align}

Given the fixed relationship between $c_\text{skip}$ and $c_\text{out}$, we choose $c_\text{skip}$ to minimize $c_\text{out}$
\begin{align}
    \frac{\mathrm{d}c_\text{out}^2}{\mathrm{d}c_\text{skip}} =&\; 
    -2\alpha_t(1-\alpha_tc_\text{skip}(t)) \sigma_0^2 \\
    & -2\beta_t\sigma_{0,T}^2 +4\alpha_t\beta_t\sigma_{0,T}^2 c_\text{skip}(t)\\
    &+ 2c_\text{skip}(t)\beta_t^2 \sigma_T^2+2\gamma_t^2c_\text{skip}(t) 
\end{align}
With first order condition $\frac{\mathrm{d}c_\text{out}^2}{\mathrm{d}c_\text{skip}}=0$, we obtain:
\begin{align}
    c_\text{skip}(t)= \frac{\alpha_t\sigma_0^2+\beta_t\sigma_{0,T}^2}{\alpha_t^2\sigma_0^2+2\alpha_t\beta_t\sigma_{0,T}^2+\beta_t^2\sigma_T^2+\gamma_t^2}.
\end{align}

\subsection{Coupling Projection}

Recall that we propose projecting $\QQ^{1}_{01}$ to $\Pi(\PP_0,\PP_1)$ at the end of each iteration
\begin{align}
\widehat{\QQ}^{1}_{01} \coloneqq \proj_{\Pi(\PP_0,\PP_1)}(\QQ^{1}_{01})
\end{align}
and using $\widehat{\QQ}^{1}_{01}$ in place of $\QQ^{1}_{01}$. However, the projection operation is well-defined only if there is a suitable metric on the space under consideration (the space of distributions, in our case). An applicable metric is the $p$-Wasserstein distance $W_p$. Then, projection w.r.t. $W_p$ is defined as
\begin{align}
\widehat{\QQ}^{1}_{01} = \argmin_{\Gamma_{01}} W_p(\Gamma_{01},\QQ^{1}_{01}) \quad s.t. \quad \Gamma_0 = \PP_0, \ \Gamma_1 = \PP_1. \label{eq:opt}
\end{align}
Furthermore, we may parameterize
\begin{align}
d\Gamma_{01}(\xx_0,\xx_1) = d\Gamma_{0|1}(\xx_0|\xx_1) d\PP_1(\xx_1) \quad or \quad d\PP_0(\xx_0) d\Gamma_{1|0}(\xx_1 | \xx_0)
\end{align}
which means (with the first parameterization), we only have to enforce the marginal constraint
\begin{align}
\widehat{\QQ}^{1}_{01} = \argmin_{\Gamma_{01}} W_p(\Gamma_{01},\QQ^{1}_{01}) \quad s.t. \quad \Gamma_0 = \PP_0, \ d\Gamma_{01} = d\Gamma_{0|1} d\PP_1. \label{eq:opt_param}
\end{align}
Noting that
\begin{align}
\Gamma_0 = \PP_0 \iff D(\Gamma_0,\PP_0) = 0
\end{align}
for distances or divergences $D$, we can optimize
\begin{align}
\min_{\Gamma_{01}} D(\Gamma_0,\PP_0) + \lambda W_p^p(\Gamma_{01},\QQ^{1}_{01}) \quad s.t. \quad d\Gamma_{01} = d\Gamma_{0|1} d\PP_1 \label{eq:opt_param_lmda}
\end{align}
for decreasing values of $\lambda$ and stop when $D(\Gamma_0,\PP_0)$ saturates. In practice, we solve
\begin{align}
\min_{\Gamma_{01}} D(\Gamma_0,\PP_0) + \lambda \SKD_p(\Gamma_{01},\QQ^{1}_{01}) \quad s.t. \quad d\Gamma_{01} = d\Gamma_{0|1} d\PP_1 \label{eq:opt_param_lmda_skd}
\end{align}
with gradient descent, where $\SKD$ stands for Sinkhorn Divergence \citep{feydy2019skd}. We approximate $\QQ^1_{01}$ as a mixture of diracs using the generated backward pairs, and approximate $D$ using a Generative Adversarial Network \citep{goodfellow2014gan}. Since we do not know an appropriate value of $\lambda$, we initialize $\lambda$ from a large value, \eg, $\lambda = 1000$, decay it by a factor of $0.1$ every time FID saturates. If decaying $\lambda$ does not offer any more FID improvement, we terminate optimization, and use the optimized $\Gamma_{01}$ as $\widehat{\QQ}_{01}^1$.

\section{Proofs} \label{append:proofs}

\subsection{Proof of Proposition \ref{prop:correctness}} \label{append:proof_1}

\begin{lemma} \label{lemma:equiv}
The following statements are equivalent.
\begin{enumerate}
    \item[(a)] $\theta$ minimizes $\LL_{\FM}(\theta;\QQ_{01},\xx_t,t)$.
    \item[(b)] $\theta$ minimizes $\LL_{\GFM}(\theta;\QQ_{01},\xx_t,t)$.
    \item[(c)] $\DD_\theta(\xx_t,t) =  \EE_{\xx_0 \sim \QQ_{0|t}(\cdot | \xx_t)}[\xx_0]$.
\end{enumerate}
\end{lemma}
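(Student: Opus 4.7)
The strategy is to prove both (a) $\iff$ (c) and (b) $\iff$ (c), which together yield the full equivalence. Both directions rely on the same elementary fact: for any square-integrable random vector $\rvz$ on $\RR^d$ and any fixed $\vy \in \RR^d$,
\begin{align*}
    \EE\left[\|\rvz - \vy\|_2^2\right] = \EE\left[\|\rvz - \EE[\rvz]\|_2^2\right] + \|\EE[\rvz] - \vy\|_2^2,
\end{align*}
so $\EE[\|\rvz - \vy\|_2^2]$ is minimized in $\vy$ uniquely at $\vy = \EE[\rvz]$, with the minimum independent of $\vy$. I will apply this once with $\rvz = \xx_0$ and once after pushing forward through $\phi$.

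\textbf{Step 1: (a) $\iff$ (c).} Fix $(\xx_t, t)$ and set $\rvz = \xx_0$ under the conditional law $\QQ_{0|t}(\cdot \mid \xx_t)$ and $\vy = \DD_\theta(\xx_t,t)$. The above identity shows $\LL_{\FM}(\theta; \QQ_{01}, \xx_t, t)$ equals a constant (in $\theta$) plus $\|\EE_{\xx_0 \sim \QQ_{0|t}(\cdot|\xx_t)}[\xx_0] - \DD_\theta(\xx_t,t)\|_2^2$. Minimization over $\theta$ therefore characterizes (a) exactly as (c), provided the hypothesis class for $\DD_\theta$ is expressive enough to realize the posterior mean at $(\xx_t,t)$, which is the standard assumption under which these equivalences are stated.

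\textbf{Step 2: (b) $\iff$ (c).} Now apply the same identity with $\rvz = \phi(\xx_0)$ (still under $\QQ_{0|t}(\cdot \mid \xx_t)$) and $\vy = \phi(\DD_\theta(\xx_t,t))$. Using linearity of $\phi$, $\EE[\phi(\xx_0)] = \phi(\EE[\xx_0])$, so
\begin{align*}
    \LL_{\GFM}(\theta;\QQ_{01},\xx_t,t) = C(\xx_t,t) + \bigl\|\phi\bigl(\EE_{\xx_0 \sim \QQ_{0|t}(\cdot|\xx_t)}[\xx_0] - \DD_\theta(\xx_t,t)\bigr)\bigr\|_2^2,
\end{align*}
where $C(\xx_t,t)$ does not depend on $\theta$. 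Because $\phi$ is an invertible linear map, $\|\phi(\vu)\|_2 = 0$ if and only if $\vu = 0$; equivalently, $\phi^\top \phi$ is positive definite, so $\|\phi(\vu)\|_2^2$ is a strictly positive definite quadratic form in $\vu$ and is minimized in $\vu$ uniquely at $\vu = 0$. Hence minimizers of $\LL_{\GFM}(\theta;\QQ_{01},\xx_t,t)$ are exactly those $\theta$ satisfying (c).

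\textbf{Main obstacle.} The computation itself is routine; the only delicate point is the role of invertibility of $\phi$ in Step 2. Without invertibility, $\phi(\DD_\theta) = \phi(\EE[\xx_0])$ would not force $\DD_\theta = \EE[\xx_0]$, and (b) would admit strictly larger minimizer sets than (a), breaking the equivalence and thereby breaking Prop.~\ref{prop:correctness}. This is precisely why the generalization in Eq.~(\ref{eq:l_phi}) is restricted to invertible linear $\phi$, in contrast to heuristic losses such as LPIPS or Pseudo-Huber that have no such guarantee.
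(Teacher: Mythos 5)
Your proof is correct. It reaches the same conclusion as the paper's but by a slightly different route: the paper computes $\nabla_{\DD_\theta}\LL_{\GFM}(\theta;\QQ_{01},\xx_t,t) = \phi^\top\phi\,\{\nabla_{\DD_\theta}\LL_{\FM}(\theta;\QQ_{01},\xx_t,t)\}$, notes that invertibility of $\phi$ makes $\phi^\top\phi$ invertible so the two gradients vanish simultaneously, and then appeals to strong convexity of both losses in $\DD_\theta$ to identify the common stationary point with the posterior mean. You instead complete the square explicitly via the bias--variance identity, once for $\xx_0$ and once for $\phi(\xx_0)$, and use linearity of $\phi$ to pull the conditional expectation through. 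The two arguments hinge on exactly the same fact --- positive definiteness of $\phi^\top\phi$ --- but yours is somewhat more self-contained: it exhibits the $\theta$-independent constant and the quadratic remainder explicitly rather than asserting strong convexity, and it makes transparent where linearity of $\phi$ (to commute with the expectation) and invertibility of $\phi$ (to force $\DD_\theta$ to equal the posterior mean rather than merely agree with it under $\phi$) each enter. Your closing remark about why non-invertible or nonlinear $\ell$ (LPIPS, pseudo-Huber) breaks the equivalence matches the paper's motivation for restricting to invertible linear $\phi$. Both you and the paper implicitly assume the hypothesis class is rich enough to realize the posterior mean pointwise; the paper states this assumption parenthetically in the proof of Proposition~\ref{prop:correctness}, and you flag it in Step~1, so there is no gap.
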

\begin{proof}
We first observe that (writing $\DD_\theta$ in place of $\DD_\theta(\xx_t,t)$ for brevity)
\begin{align}
    \nabla_{\DD_\theta} \LL_{\GFM}(\theta;\QQ_{01},\xx_t,t) = \phi^\top \phi \{ \nabla_{\DD_\theta} \LL_{\FM}(\theta;\QQ_{01},\xx_t,t) \}
\end{align}
and since $\phi$ is invertible, $\phi^\top \phi$ is invertible as well, which implies
\begin{align}
    \nabla_{\DD_\theta} \LL_{\GFM}(\theta;\QQ_{01},\xx_t,t) = \bm{0} \iff \nabla_{\DD_\theta} \LL_{\FM}(\theta;\QQ_{01},\xx_t,t) = \bm{0}.
\end{align}
Because both $\LL_{\GFM}(\theta;\QQ_{01},\xx_t,t)$ and $\LL_{\FM}(\theta;\QQ_{01},\xx_t,t)$ are strongly convex w.r.t. $\DD_\theta$, this means $\theta$ minimizes $\LL_{\GFM}(\theta;\QQ_{01},\xx_t,t)$ iff $\theta$ minimizes $\LL_{\FM}(\theta;\QQ_{01},\xx_t,t)$ iff
\begin{align}
    \DD_{\theta}(\xx_t,t) = \EE_{\xx_0 \sim \QQ_{0|t}(\cdot | \xx_t)}[\xx_0].
\end{align}
This establishes the equivalence of the three claims.
\end{proof}

\begin{lemma} \label{lemma:ineq}
Let $\mu$ be a $\sigma$-finite measure. If $f > g$ on a set $A$ with $\mu(A) > 0$, $\int_A f \, d\mu > \int_A g \, d\mu$.
\end{lemma}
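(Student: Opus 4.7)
\textbf{Proof proposal for Lemma~\ref{lemma:ineq}.}

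The plan is to reduce the claim to showing that the nonnegative function $h \coloneqq (f - g)\mathbf{1}_A$ has strictly positive integral, then extract a subset of $A$ on which $h$ is bounded below by a positive constant and which has positive finite measure. Concretely, I would proceed as follows. First, for each $n \in \mathbb{N}$, set
\begin{equation*}
    A_n \coloneqq \{x \in A : f(x) - g(x) > 1/n\}.
\end{equation*}
Since $f > g$ on $A$, the sets $A_n$ are increasing in $n$ and satisfy $\bigcup_{n \geq 1} A_n = A$. By continuity from below of the measure $\mu$, we have $\mu(A_n) \uparrow \mu(A) > 0$, so there exists $n_0$ with $\mu(A_{n_0}) > 0$.

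Next, I would invoke $\sigma$-finiteness to handle the possibility $\mu(A_{n_0}) = \infty$. Write the ambient space as $X = \bigcup_{k \geq 1} E_k$ with $\mu(E_k) < \infty$ for each $k$. Then $A_{n_0} = \bigcup_{k \geq 1}(A_{n_0} \cap E_k)$, and by countable subadditivity at least one set $B \coloneqq A_{n_0} \cap E_{k_0}$ satisfies $\mu(B) > 0$; by construction $\mu(B) \leq \mu(E_{k_0}) < \infty$, so $0 < \mu(B) < \infty$. On $B$ we have $f - g > 1/n_0$ pointwise, hence by monotonicity of the integral,
\begin{equation*}
    \int_A (f - g)\, d\mu \;\geq\; \int_B (f - g)\, d\mu \;\geq\; \frac{\mu(B)}{n_0} \;>\; 0.
\end{equation*}
Finally, assuming the integrals $\int_A f\, d\mu$ and $\int_A g\, d\mu$ are well defined in the extended real sense (which will be the case in the intended application, where $f, g$ are losses), linearity yields $\int_A f\, d\mu > \int_A g\, d\mu$.

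The only subtle step is the second one: a naive argument that ``$\mu(A_{n_0}) > 0$ already suffices'' fails when $\mu(A_{n_0}) = \infty$, since one cannot directly multiply an infinite measure by $1/n_0$ to extract a meaningful lower bound without first truncating to a finite-measure piece. This is precisely where the $\sigma$-finiteness hypothesis is used, and it is the only real obstacle in the proof; everything else is routine continuity-of-measure and monotonicity of the Lebesgue integral. Measurability of $A_n$ follows from measurability of $f$ and $g$, which is implicit in the hypothesis that the integrals make sense.
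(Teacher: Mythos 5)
Your proof is correct and follows essentially the same route as the paper's: decompose $A$ into the level sets $A_n = \{x \in A : f(x)-g(x) > 1/n\}$, extract one of positive measure, and bound the integral below by $\mu(A_n)/n$ (the paper first reduces to $g=0$ by linearity and uses countable subadditivity rather than continuity from below, but these are cosmetic differences). The one substantive divergence is the step you flag as ``the only real obstacle'': the truncation of $A_{n_0}$ to a finite-measure piece $B$ via $\sigma$-finiteness. This step is superfluous, and the concern motivating it is spurious: if $\mu(A_{n_0}) = \infty$, then in extended-real arithmetic $\int_{A_{n_0}} (f-g)\, d\mu \geq \mu(A_{n_0})/n_0 = \infty > 0$, which is exactly the conclusion you need. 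Accordingly, the paper's proof never invokes $\sigma$-finiteness at all, and the hypothesis is not actually needed for this lemma. A final remark that applies equally to your proof and the paper's: the passage from $\int_A (f-g)\, d\mu > 0$ to $\int_A f\, d\mu > \int_A g\, d\mu$ silently requires $\int_A g\, d\mu < \infty$ (otherwise both sides are $+\infty$ and the strict inequality fails, e.g.\ $f=2$, $g=1$ on a set of infinite measure); your parenthetical caveat about the integrals being well defined gestures at this but does not quite pin it down, and in the intended application the relevant integrals are finite, so the issue is harmless there.
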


\begin{proof}
By linearity of integrals, we can assume $g = 0$. Since $f > 0$ on $A$, we may express
\begin{align}
    A = \cup_{n=1}^\infty A_n, \quad A_n \coloneqq \{x \in A : f(x) > 1/n\}.
\end{align}
Since $\mu(A) > 0$, there is $n$ such that $\mu(A_n) > 0$. Otherwise, by subadditivity of measures,
\begin{align}
    \textstyle \mu(A) \leq \sum_{n=1}^\infty \mu(A_n) = 0
\end{align}
which contradicts the assumption $\mu(A) > 0$. It follows that
\begin{align}
    \textstyle \int_A f \, d\mu \geq \int_{A_n} f \, d\mu \geq \int_{A_n} \frac{1}{n} \, d\mu = \frac{\mu(A_n)}{n} > 0.
\end{align}
This establishes the claim.
\end{proof}

\textit{Proof of Proposition \ref{prop:correctness}.} Denote the measure of $(t,\xx_t)$ where $t \sim \unif(0,1)$ and $\xx_t \sim \QQ_t$ as $\mu$. (Assuming $\DD_\theta$ can approximate a sufficiently large set of functions), define $\theta^*$ as the neural net parameter which satisfies
\begin{align}
    \DD_{\theta^*}(\xx_t,t) = \EE_{\xx_0 \sim \QQ_{0|t}(\cdot | \xx_t)}[\xx_0]
\end{align}
for any $(\xx_t,t)$ such that
\begin{align}
     \LL_{\GFM}(\theta;\QQ_{01},\xx_t,t) \geq \LL_{\GFM}(\theta^*;\QQ_{01},\xx_t,t)
\end{align}
or equivalently,
\begin{align}
     \LL_{\FM}(\theta;\QQ_{01},\xx_t,t) \geq \LL_{\FM}(\theta^*;\QQ_{01},\xx_t,t)
\end{align}
for any $(\xx_t,t)$ and $\theta$ by Lemma \ref{lemma:equiv}.

We now show that a minimizer of \eqref{eq:gen_reflow_loss} minimizes \eqref{eq:fm_loss}. Suppose $\theta$ minimizes \eqref{eq:gen_reflow_loss}, but there is a set $A$ with positive measure, \ie, $\mu(A) > 0$, such that
\begin{align}
    \DD_{\theta}(\xx_t,t) \neq \EE_{\xx_0 \sim \QQ_{0|t}(\cdot | \xx_t)}[\xx_0]
\end{align}
for all $(\xx_t,t) \in A$. By Lemma \ref{lemma:equiv},
\begin{align}
    \LL_{\GFM}(\theta;\QQ_{01},\xx_t,t) > \LL_{\GFM}(\theta^*;\QQ_{01},\xx_t,t)
\end{align}
for all $(\xx_t,t) \in A$, and since $w(\xx_t,t)$ and $d\TT(t)$ are positive by assumption,
\begin{align}
    d\TT(t) \cdot w(\xx_t,t) \cdot \LL_{\GFM}(\theta;\QQ_{01},\xx_t,t) > d\TT(t) \cdot w(\xx_t,t) \cdot \LL_{\GFM}(\theta^*;\QQ_{01},\xx_t,t)
\end{align}
for all $(\xx_t,t) \in A$, so by Lemma \ref{lemma:ineq},
\begin{align}
    &\EE_{(t,\xx_t) \sim \mu}[1_A(\xx_t,t) \cdot d\TT(t) \cdot w(\xx_t,t) \cdot \LL_{\GFM}(\theta;\QQ_{01},\xx_t,t) ] \\
    &> \EE_{(t,\xx_t) \sim \mu}[1_A(\xx_t,t) \cdot d\TT(t) \cdot w(\xx_t,t) \cdot \LL_{\GFM}(\theta^*;\QQ_{01},\xx_t,t) ]
\end{align}
where $1_A(\xx_t,t) = 1$ if $(\xx_t,t) \in A$ and $0$ if not, and so
\begin{align}
    &\LL_{\GFM}(\theta;\QQ_{01}) = \EE_{(t,\xx_t) \sim \mu}[d\TT(t) \cdot w(\xx_t,t) \cdot \LL_{\GFM}(\theta;\QQ_{01},\xx_t,t) ] \\
    &> \EE_{(t,\xx_t) \sim \mu}[d\TT(t) \cdot w(\xx_t,t) \cdot \LL_{\GFM}(\theta^*;\QQ_{01},\xx_t,t) ] = \LL_{\GFM}(\theta^*;\QQ_{01})
\end{align}
which contradicts the assumption that $\theta$ minimizes \eqref{eq:gen_reflow_loss}. It follows that if $\theta$ minimizes \eqref{eq:gen_reflow_loss},
\begin{align}
    \DD_{\theta}(\xx_t,t) = \EE_{\xx_0 \sim \QQ_{0|t}(\cdot | \xx_t)}[\xx_0]
\end{align}
almost everywhere w.r.t. $\mu$, it also minimizes
\begin{align}
    \LL_{\FM}(\theta;\QQ_{01},\xx_t,t)
\end{align}
almost everywhere w.r.t. $\mu$ by Lemma \ref{lemma:equiv}, which implies $\theta$ minimizes \eqref{eq:fm_loss}.

The other direction can be proven in an analogous manner. \qed

\subsection{Proof of Proposition \ref{prop:scale}} \label{append:proof_2}

\textit{Proof of Proposition \ref{prop:scale}}. Let $\QQ_{01}^0 = \PP_0 \otimes \PP_1$. If we assume zero initialization in output layer for $\DD_\theta$,
\begin{align}
&\max_{\xx_1} \LL_{\GFM}(\theta;\QQ_{01}^0,\xx_1,1) / \min_{\xx_1} \LL_{\GFM}(\theta;\QQ_{01}^0\xx_1,1) \\
&= \max_{\xx_1} \EE_{\xx_0 \sim \QQ^{0}_{0|1}(\cdot|\xx_1)} \|\xx_0 - \DD_\theta(\xx_1,1)\|_2^2 / \min_{\xx_1} \EE_{\xx_0 \sim \QQ^{0}_{0|1}(\cdot|\xx_1)} \|\xx_0 - \DD_\theta(\xx_1,1)\|_2^2 \\
&= \max_{\xx_1} \EE_{\xx_0 \sim \QQ^{0}_{0|1}(\cdot|\xx_1)} \|\xx_0\|_2^2 / \min_{\xx_1} \EE_{\xx_0 \sim \QQ^{0}_{0|1}(\cdot|\xx_1)} \|\xx_0\|_2^2 \\
&= \max_{\xx_1} \EE_{\xx_0 \sim \PP^{0}} \|\xx_0\|_2^2 / \min_{\xx_1} \EE_{\xx_0 \sim \PP^{0}} \|\xx_0\|_2^2 = 1
\end{align}
 On the other hand, if we use a pre-trained diffusion model to initialize $\DD_\theta$,
\begin{align}
\DD_\theta(\xx_1,1) = \bm{\mu}_0
\end{align}
such that
\begin{align}
& \max_{\xx_1} \LL_{\GFM}(\theta;\QQ_{01}^1,\xx_1,1) / \min_{\xx_1} \LL_{\GFM}(\theta;\QQ_{01}^1,\xx_1,1) \\
&= \max_{\xx_1} \EE_{\xx_0 \sim \QQ^{1}_{0|1}(\cdot|\xx_1)} \|\xx_0 - \bm{\mu}_0 \|_2^2 / \min_{\xx_1} \EE_{\xx_0 \sim \QQ^{1}_{0|1}(\cdot|\xx_1)} \|\xx_0 - \bm{\mu}_0 \|_2^2 \\
&= \max_{\xx_0} \|\xx_0 - \bm{\mu}_0 \|_2^2 / \min_{\xx_0} \|\xx_0 - \bm{\mu}_0 \|_2^2
\end{align}
because $\xx_1 \mapsto \xx_0 \sim \QQ^1_{0|1}(\cdot|\xx_1)$ is now a bijective map between $\PP_0$ and $\PP_1$ samples.

\section{Additional Experiments}

\subsection{Linear Discretization Lacks Discriminative Power} \label{append:power}

While all previous works use the uniform discretization to sample from ReFlow models, we use the sigmoid discretization to evaluate models in Sections \ref{sec:dynamics} and \ref{sec:learning}. This is because, we found that the uniform discretization lacks discrimination power, \ie, the ability to make the best of a given model, especially at small NFEs.

To demonstrate this, in Tab.~\ref{tab:power}, we re-evaluate models in Sec.~\ref{sec:normalization} with the uniform discretization, and compare them with evaluation results with the sigmoid discretization with $\kappa=10$. We observe that none of the FIDs with the uniform discretization are better than the worst FID with the sigmoid discretization. Moreover, the model with our proposed weight, when evaluated with the uniform schedule, performs worse than the model with uniform weight.

\begin{wraptable}{r}{0.45\linewidth}
\centering
\vspace{-4mm}
\resizebox{0.4\textwidth}{!}{
\begin{tabular}{ccc}
\toprule
$w(\xx_t,t)$ & \textbf{Uniform} & \textbf{Sigmoid} \\
\cmidrule{1-3}
$1$ & $\first{2.88}$ & $2.87$ \\
$1/t$ & $\second{2.89}$ & $\second{2.76}$ \\
$1/t^2$ & $2.93$ & $\first{2.74}$ \\
$(\sigma^2 + 0.5^2) / (0.5\sigma)^2$ & $2.97$ & $2.82$ \\
$1/\EE_{\xx_t} [ \sg[\LL_{\GFM}(\theta;\QQ_{01},\xx_t,t)] ]$ & $2.98$ & $2.79$ \\
\cmidrule(lr){1-3}
$1/\sg[\LL_{\GFM}(\theta;\QQ_{01},\xx_t,t)]$ & $2.95$ & $\first{2.74}$ \\
\bottomrule
\end{tabular}}
\caption{Uniform vs. sigmoid ($\kappa = 10$) discretizations with Heun on AFHQv2.}
\label{tab:power}
\vspace{-4mm}
\end{wraptable}

We speculate this happens because, as analyzed in Sec.~\ref{sec:inference}, large curvature regions for ReFlow ODEs occur near $t \in \{0,1\}$, but the uniform discretization fails to account for them. So, the uniform discretization is unable to accurately capture the differences in ODE trajectories between different models. Due to these reasons, we opt to use the sigmoid discretization to distinguish training techniques that work from those that do not.

\subsection{DPM-Solver and Guidance Ablations} \label{append:solver_guidance_ablation}

Recall that the DPM-Solver update for \eqref{eq:fm_ode} is given as
\begin{align}
\textstyle \xx_{t_i} \leftarrow \xx_{t_{i+1}} + (t_i - t_{i+1}) (\frac{1}{2r} \vv_\theta(\xx_{s_{i+1}},s_{i+1}) + (1 - \frac{1}{2r}) \vv_\theta(\xx_{t_{i+1}},t_{i+1})).
\end{align}
In Fig.~\ref{fig:dpm_ablation}, we show the FID for various values of $r \in (0,1]$. While we can get better FIDs than those in Tab.~\ref{tab:disc_dpm} by using $r$ tailored to individual datasets, we opt for simplicity and set $r = 0.4$ as our improved choice, which still yields better FID than the Heun solver, \ie, using $r = 1$.

For conditional ReFlow models, classifier-free guidance (CFG) \citep{ho2022cfg} can be formulated as solving the ODE
\begin{align}
    d\xx_t = \{(1 + w) \cdot \vv_\theta(\xx_t,t,c) - w \cdot \vv_\theta(\xx_t,t,\varnothing)\} \, dt
\end{align}
where $\vv_\theta(\xx_t,t,c)$ is velocity conditioned on $c$, and $\vv_\theta(\xx_t,t,\varnothing)$ is an unconditional velocity, and $w$ is guidance scale. Note that $w = 0$ reduces the ODE to standard class-conditional generation. In practice, we train conditional velocities with label dropout such that $\vv_\theta(\xx_t,t,c)$ and $\vv_\theta(\xx_t,t,\varnothing)$ can be evaluated in parallel, by passing class labels to the former and null labels to the latter.

\begin{table}[t]
\centering
\begin{minipage}{0.35\textwidth}
\centering
\includegraphics[width=1.0\linewidth]{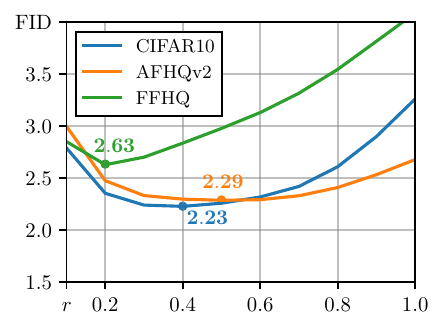}
\captionof{figure}{DPM-Solver $r$}
\label{fig:dpm_ablation}
\end{minipage}
\begin{minipage}{0.35\textwidth}
\centering
\includegraphics[width=1.0\linewidth]{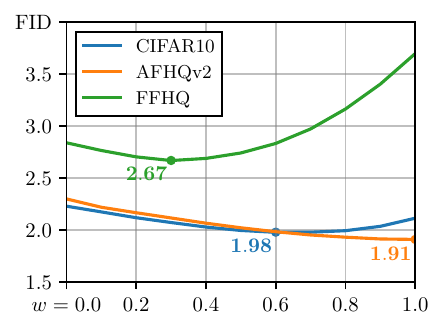}
\captionof{figure}{AutoGuidance $w$}
\label{fig:AG_ablation}
\end{minipage}
\vspace{-3mm}
\end{table}

For unconditional ReFlow models, AutoGuidance \citep{karras2024autog} can be formulated as solving
\begin{align}
    d\xx_t = \{(1 + w) \cdot \vv_\theta(\xx_t,t) - w \cdot \hat{\vv}_\phi(\xx_t,t)\} \, dt
\end{align}
where $\hat{\vv}_\phi$ is a degraded version of $\vv_\theta$. In practice, we use ReFlow models trained with the baseline training configuration (see Tab.~\ref{tab:choices}) for $10k$ iterations as $\hat{\vv}_\phi$. While other choices of $\hat{\vv}_\phi$ may offer better FIDs, as AG is not the main topic of our paper, we do not perform an extensive search.


\newpage

\subsection{Sample Visualization}

\begin{figure}[H]
\centering
\begin{subfigure}{0.87\linewidth}
\centering
\includegraphics[width=\linewidth]{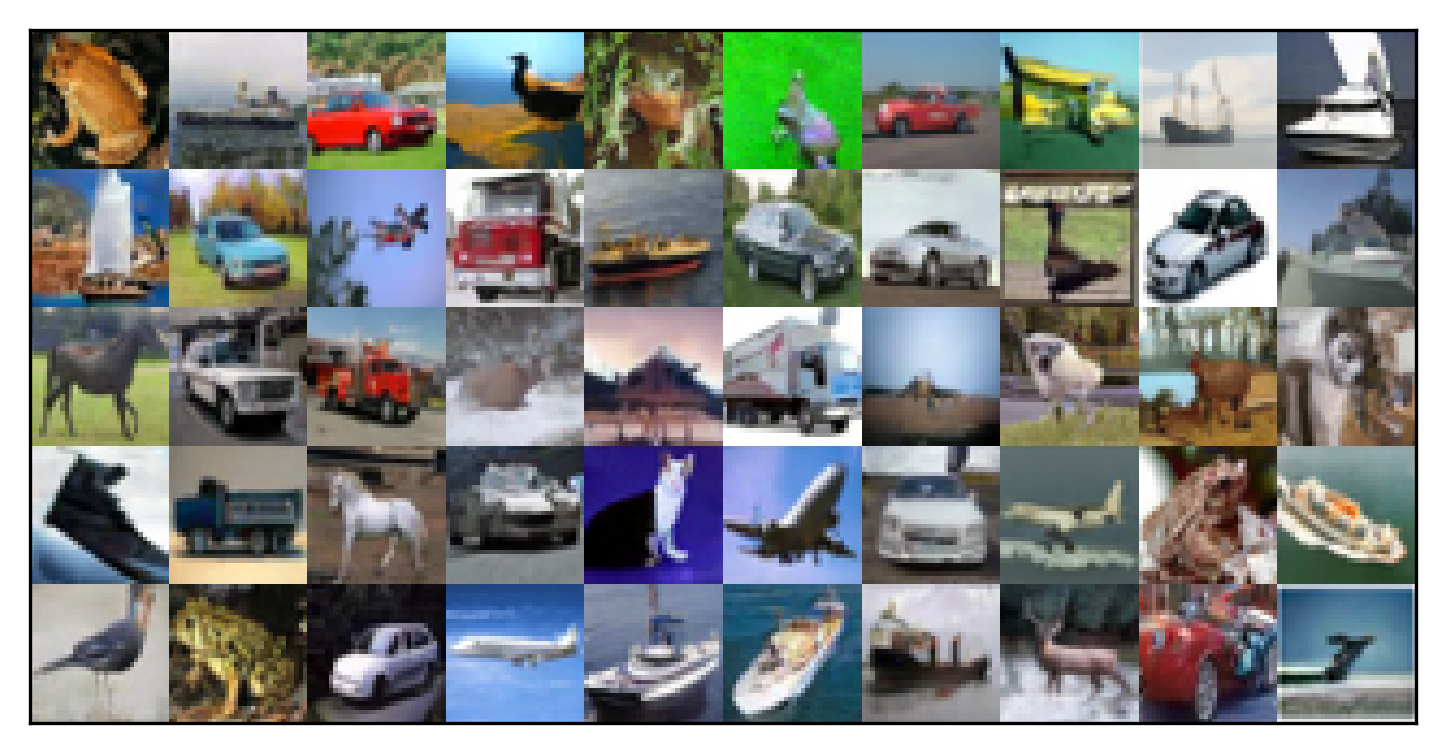}
\caption{BSL, $2.83$ FID with 9 NFEs}
\label{fig:cifar10_BSL_samples}
\end{subfigure}
\begin{subfigure}{0.87\linewidth}
\centering
\includegraphics[width=\linewidth]{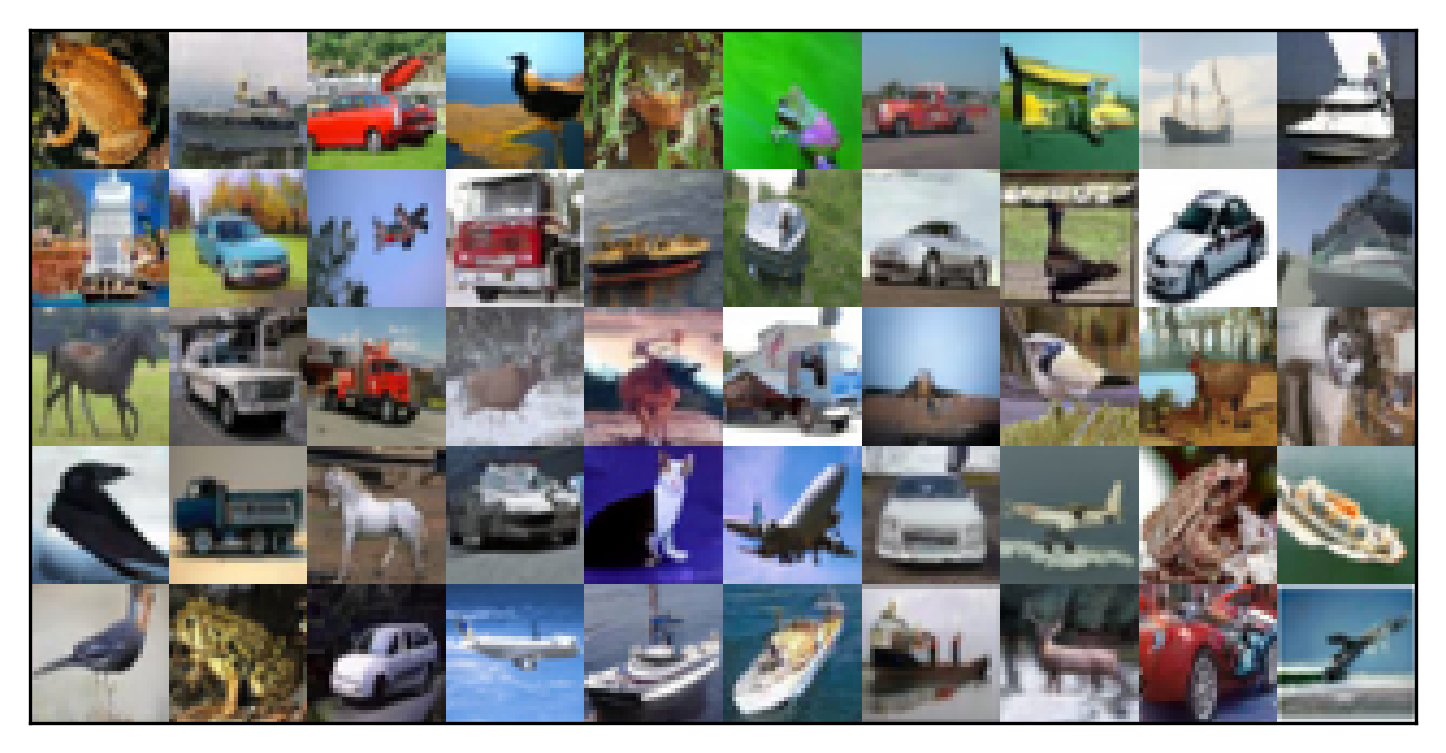}
\caption{DYN+LRN+INF, $2.23$ FID with 9 NFEs}
\label{fig:cifar10_IMP_samples}
\end{subfigure}
\begin{subfigure}{0.87\linewidth}
\centering
\includegraphics[width=\linewidth]{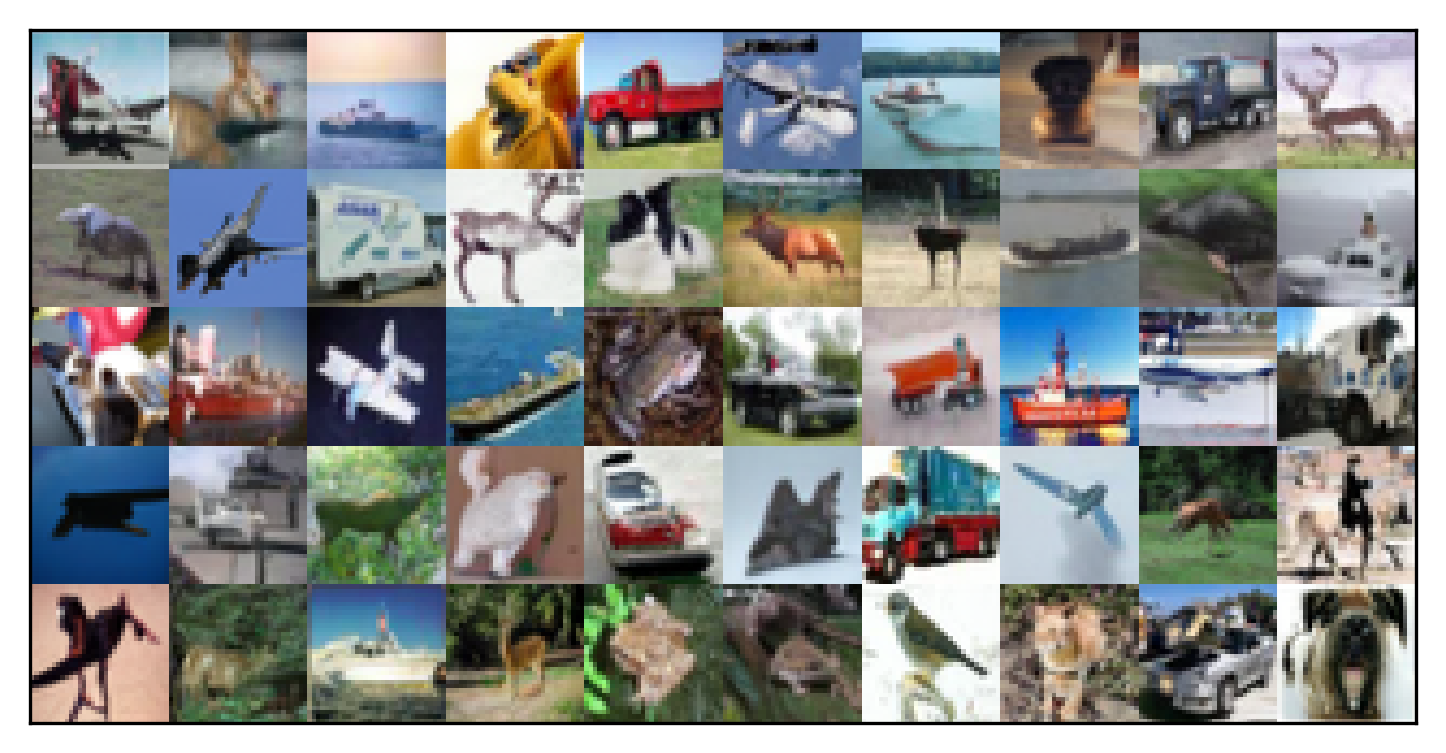}
\caption{DYN+LRN+INF+AG, $1.98$ FID with 9 NFEs}
\label{fig:cifar10_IMP_GD_samples}
\end{subfigure}
\caption{CIFAR10 samples with fixed random seeds}
\label{fig:cifar10_samples}
\end{figure}

\begin{figure}[H]
\centering
\begin{subfigure}{0.9\linewidth}
\centering
\includegraphics[width=\linewidth]{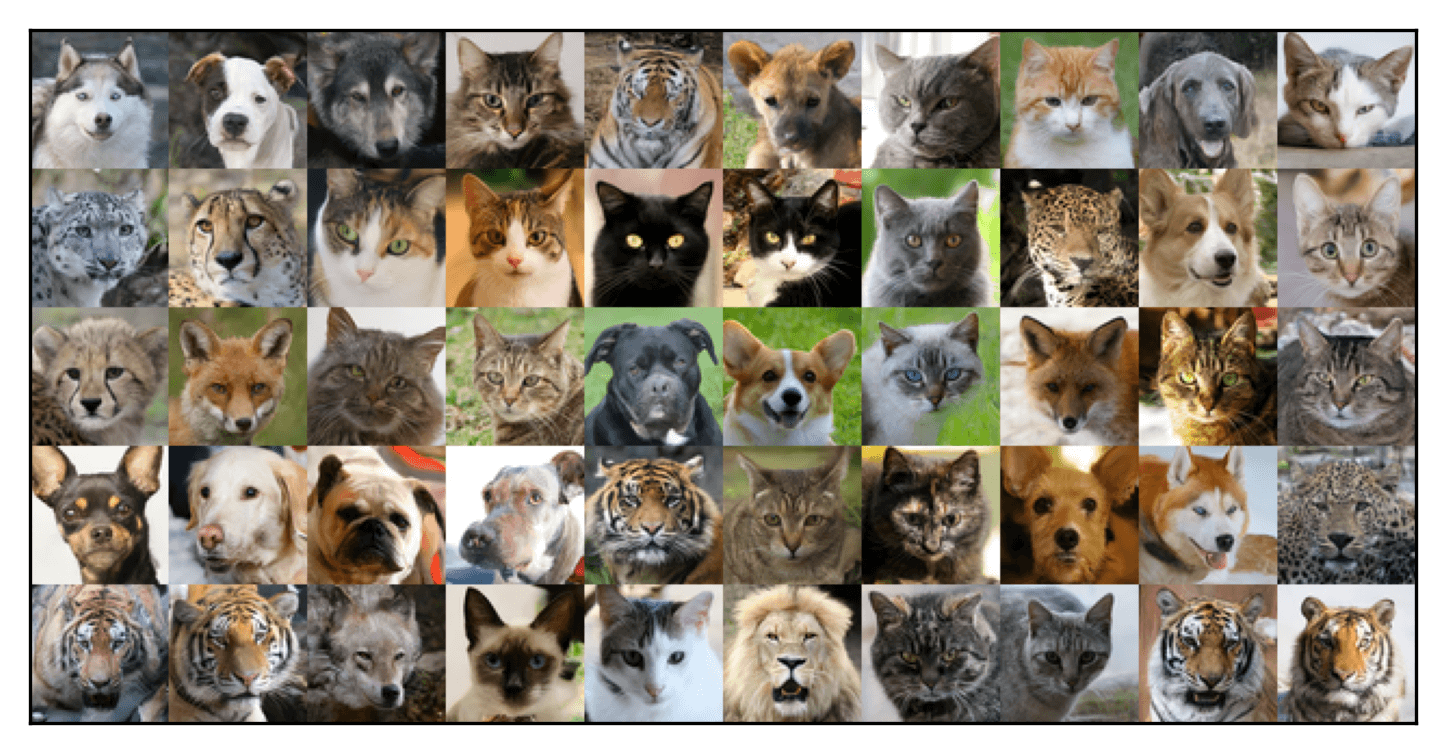}
\caption{BSL, $2.87$ FID with 9 NFEs}
\label{fig:afhqv2_BSL_samples}
\end{subfigure}
\begin{subfigure}{0.9\linewidth}
\centering
\includegraphics[width=\linewidth]{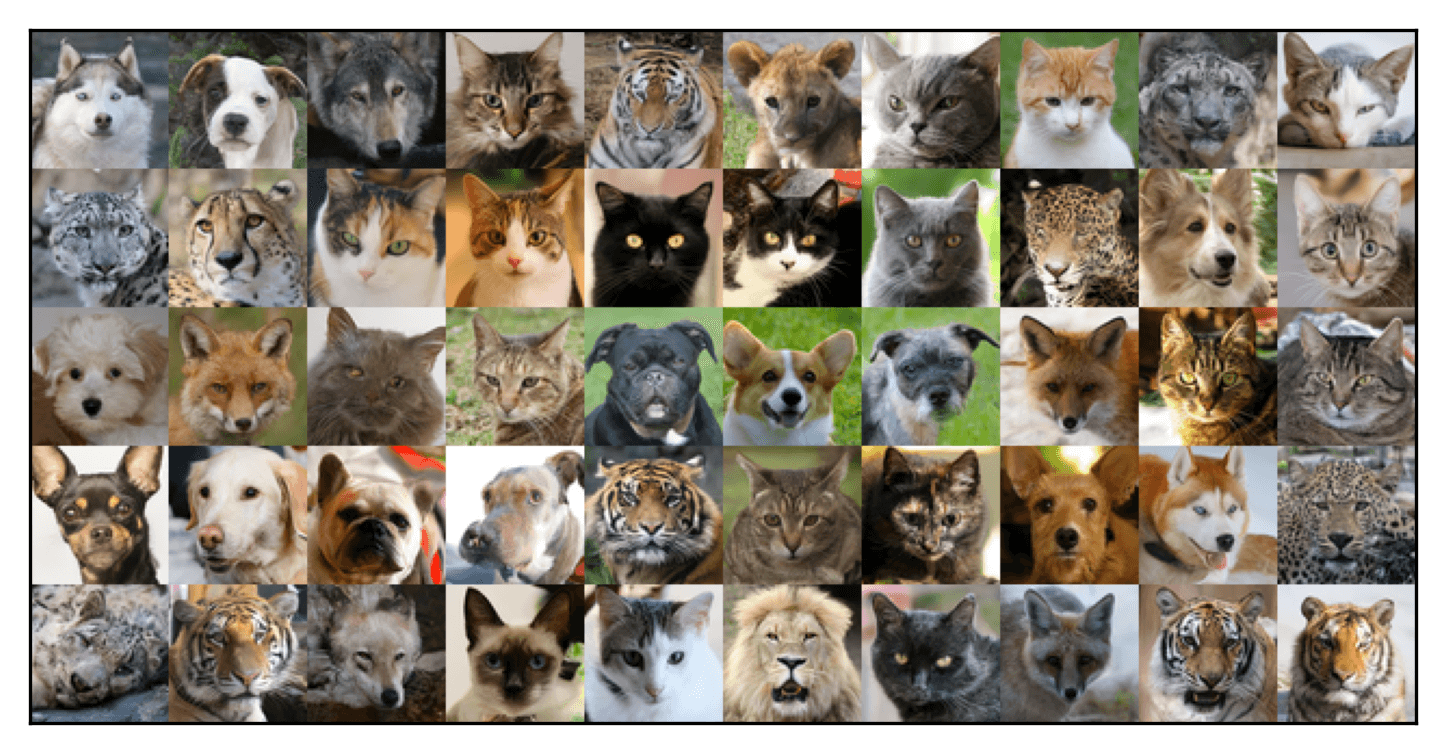}
\caption{DYN+LRN+INF, $2.30$ FID with 9 NFEs}
\label{fig:afhqv2_IMP_samples}
\end{subfigure}
\begin{subfigure}{0.9\linewidth}
\centering
\includegraphics[width=\linewidth]{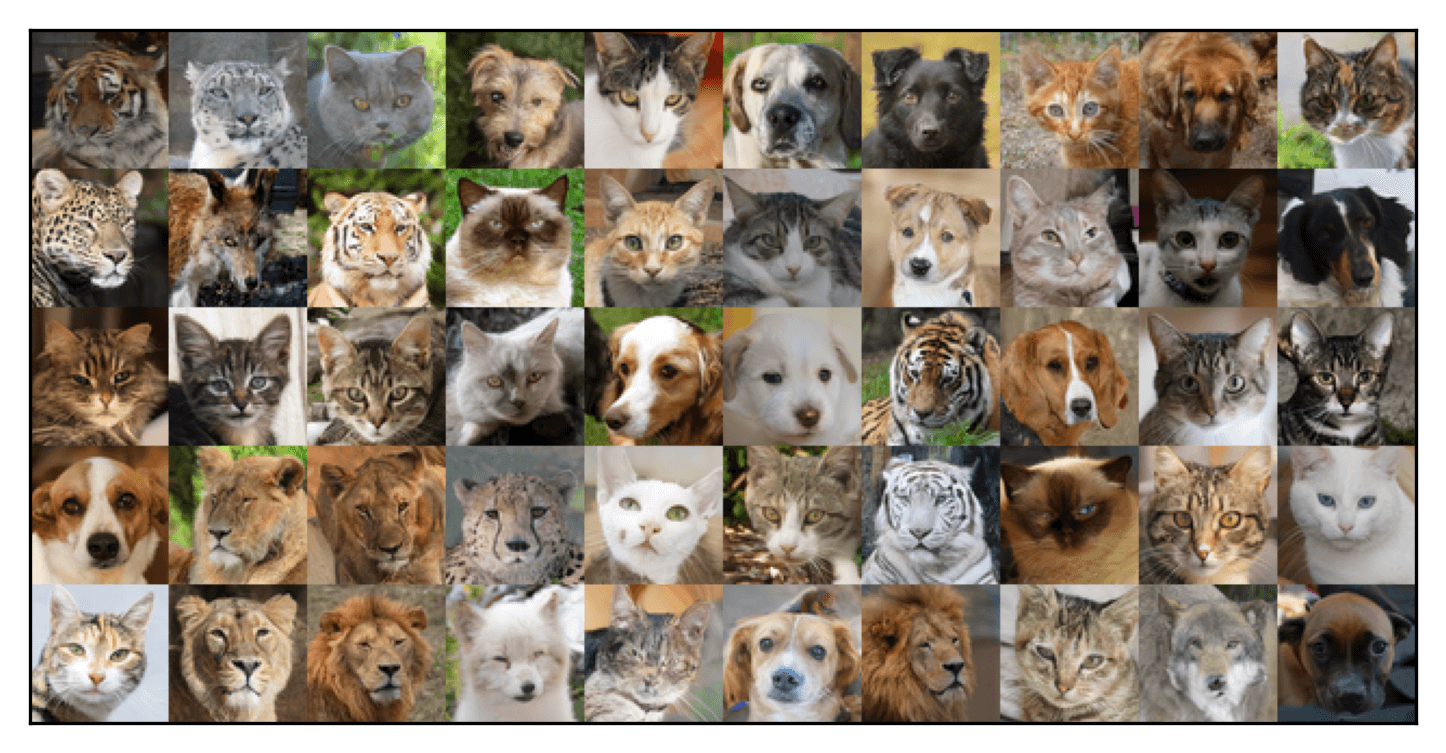}
\caption{DYN+LRN+INF+AG, $1.91$ FID with 9 NFEs}
\label{fig:afhqv2_IMP_GD_samples}
\end{subfigure}
\caption{AFHQv2 samples with fixed random seeds}
\label{fig:afhqv2_samples}
\end{figure}

\begin{figure}[H]
\centering
\begin{subfigure}{0.9\linewidth}
\centering
\includegraphics[width=\linewidth]{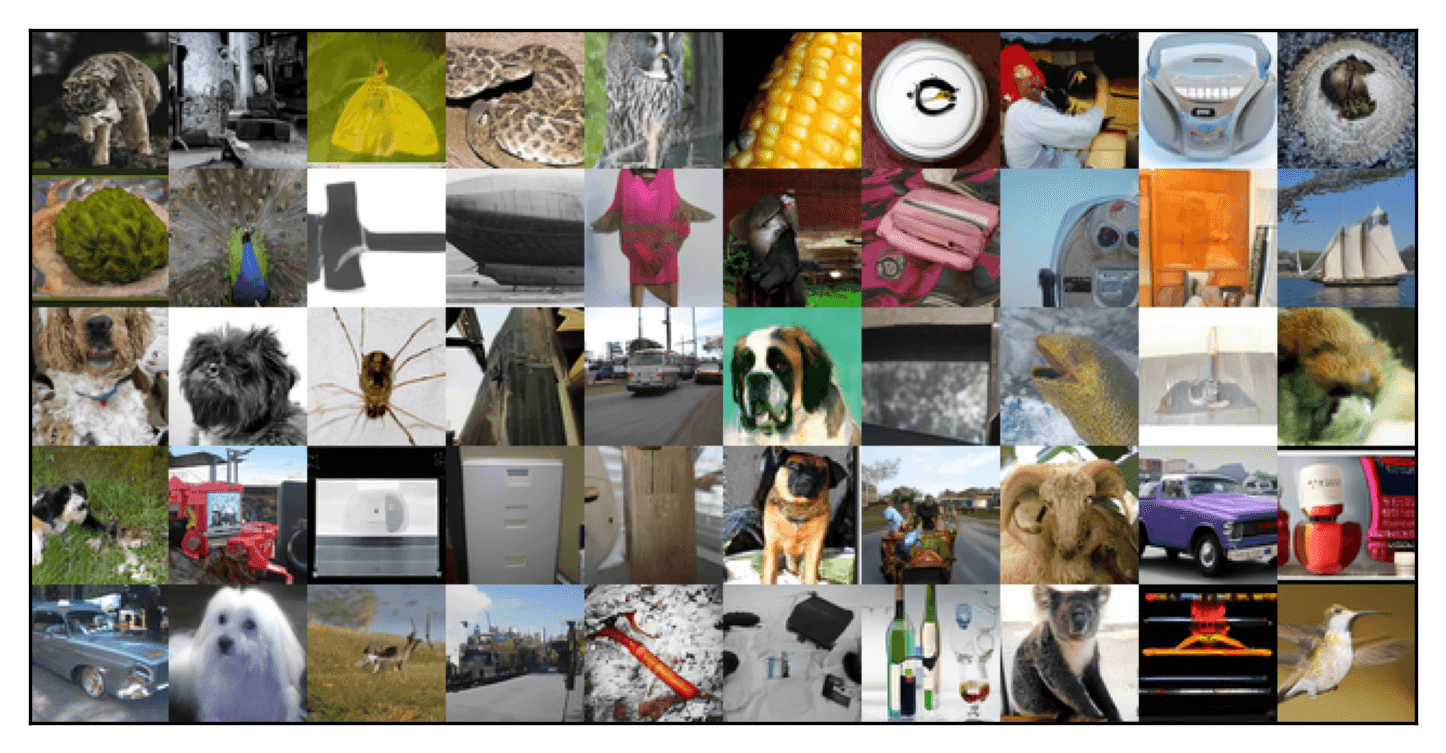}
\caption{BSL, $4.27$ FID with 9 NFEs}
\label{fig:imagenet_BSL_samples}
\end{subfigure}
\begin{subfigure}{0.9\linewidth}
\centering
\includegraphics[width=\linewidth]{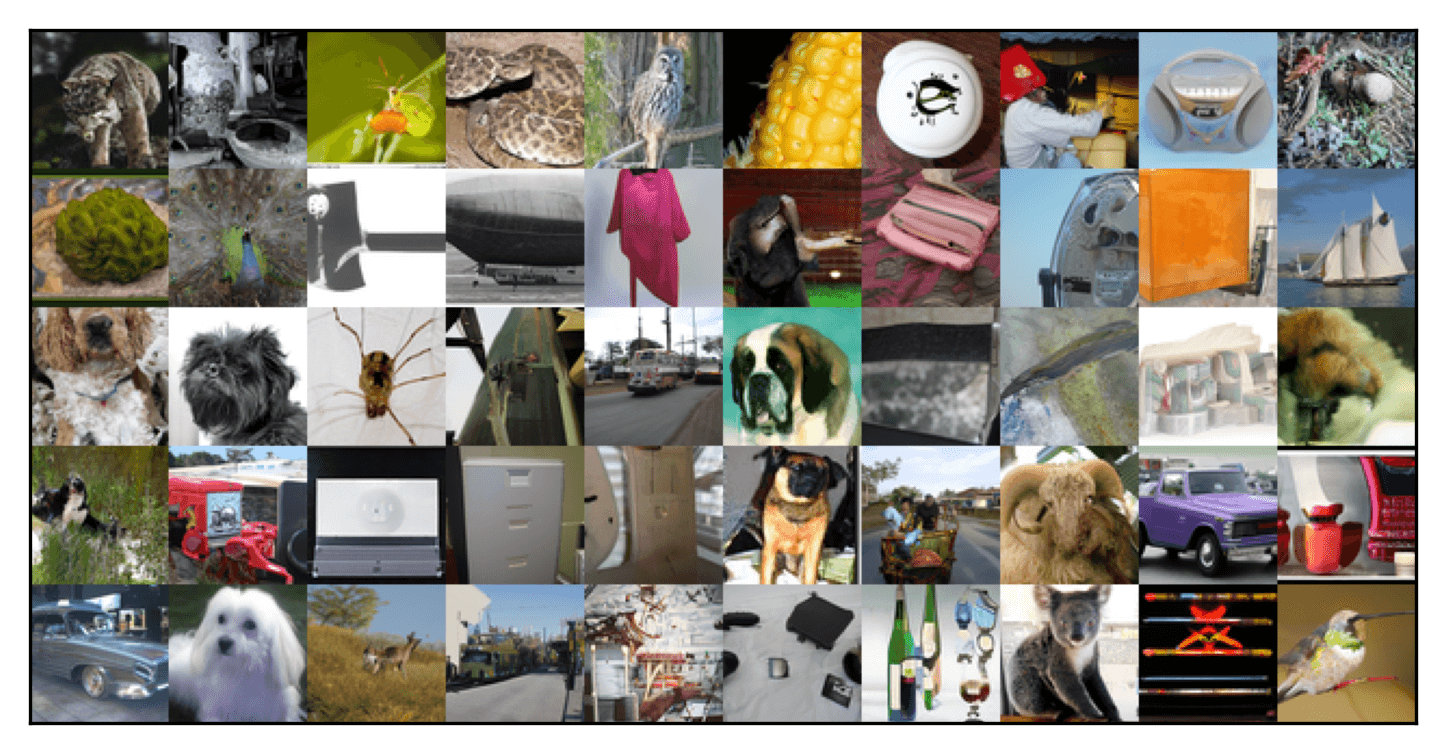}
\caption{DYN+LRN+INF, $3.49$ FID with 9 NFEs}
\label{fig:imagenet_IMP_samples}
\end{subfigure}
\begin{subfigure}{0.9\linewidth}
\centering
\includegraphics[width=\linewidth]{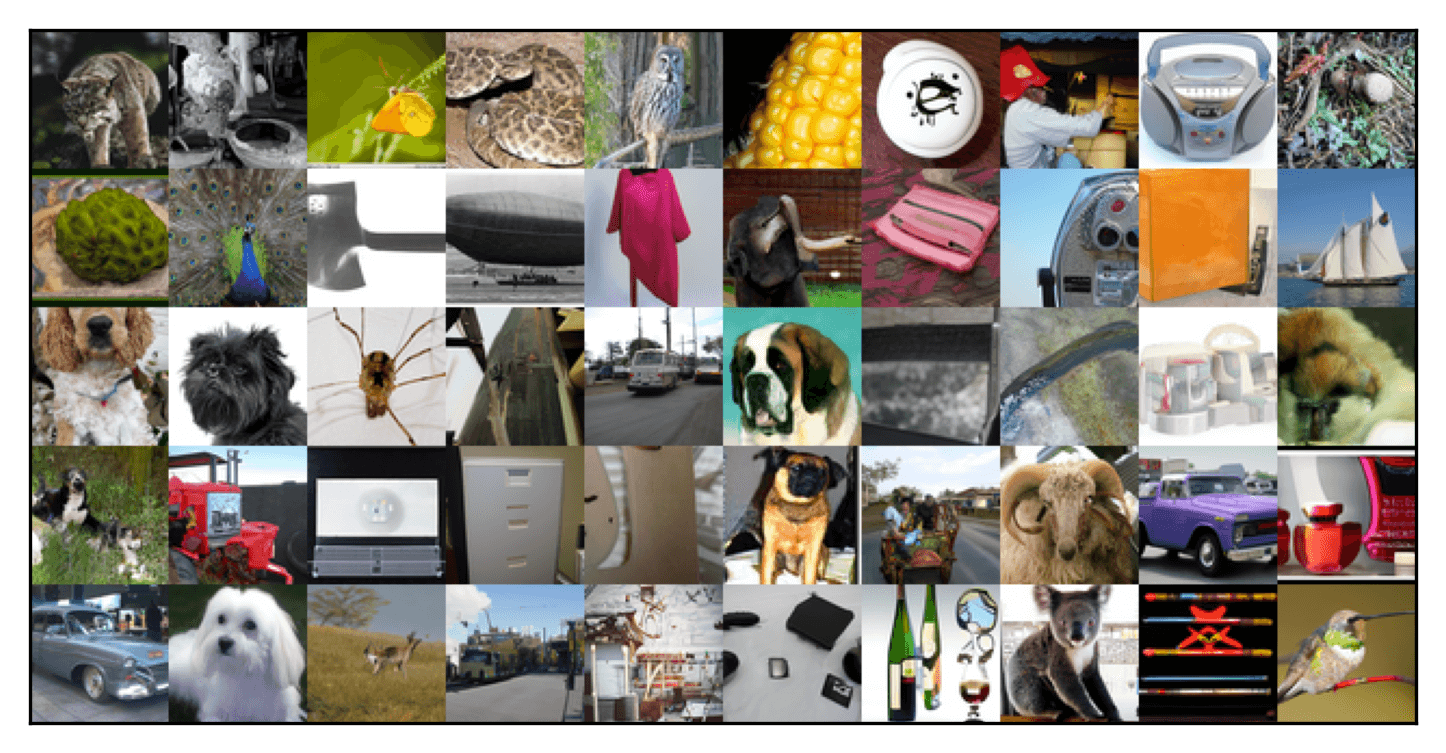}
\caption{DYN+LRN+INF+CFG, $1.74$ FID with 9 NFEs}
\label{fig:imagenet_IMP_GD_samples}
\end{subfigure}
\caption{ImageNet-64 samples with fixed random seeds}
\label{fig:imagenet_samples}
\end{figure}

\end{document}